\documentclass{article}

\usepackage[final,nonatbib]{neurips_2020}

\usepackage{microtype}
\usepackage{graphicx}
\usepackage{subfigure}
\usepackage{booktabs} 





\usepackage[utf8]{inputenc} 
\usepackage[T1]{fontenc}    
\usepackage{url}            
\usepackage{booktabs}       
\usepackage{amsfonts}       
\usepackage{nicefrac}       
\usepackage{pgfplots}
\usepackage{pgfplotstable}
\usepackage{caption}
\usepackage{layouts}
\usepackage{xcolor}

\usepackage{amsmath}
\usepackage{latexsym}
\usepackage{xcolor}
\usepackage{xspace}
\usepackage{dsfont}
\usepackage{thmtools} 
\usepackage{thm-restate}
\usepackage{algorithm}
\usepackage{enumerate}
\usepackage{amsmath}
\usepackage{algorithmic}
\usepackage{mathtools}
\usepackage{wrapfig}
\usepackage{amsmath}
\usepackage{latexsym}
\usepackage{xcolor}
\usepackage{xspace}
\usepackage{dsfont}
\usepackage{thmtools} 
\usepackage{thm-restate}
\usepackage[capitalize, nameinlink]{cleveref}

\usepackage[capitalize, nameinlink]{cleveref}
\crefname{theorem}{Theorem}{Theorems}
\Crefname{lemma}{Lemma}{Lemmas}
\Crefname{claim}{Claim}{Claims}
\Crefname{observation}{Observation}{Observations}
\Crefname{invariant}{Invariant}{Invariants}

\newtheorem{theorem}{Theorem}[section]
\newtheorem{lemma}[theorem]{Lemma}

\newtheorem{corollary}[theorem]{Corollary}

\newtheorem{definition}[theorem]{Definition}

\newenvironment{proof}{\par\noindent\textit{Proof.}}{ \hfill $\Box$\par\bigskip\par}
\newenvironment{proofof}[1]{\par\noindent\textit{Proof of #1.}}{$\Box$\par\bigskip\par}

\newcommand{\reald}{\mathbb{R}^d}
\newcommand{\OPT}{\operatorname{OPT}}

\newcommand{\EE}{\operatorname{\mathbf{E}}}
\newcommand{\ee}[1]{\EE \left[ #1 \right]}

\newcommand{\dist}{\textsc{Dist}\xspace}
\newcommand{\MaxDist}{\textsc{MaxDist}\xspace}

\newcommand{\p}{\textbf{p}}

\newcommand{\cF}{\mathcal{F}}

\newcommand{\cH}{\mathcal{H}}


\newcommand{\kmpp}{\textsc{k-means++}\xspace}
\newcommand{\kmc}{\textsc{Afkmc2}\xspace}
\newcommand{\rejsam}{\textsc{RejectionSampling}\xspace}

\newcommand{\fastkm}{\textsc{Fastk-means++}\xspace}
\newcommand{\random}{\textsc{UniformSampling}\xspace}

\newcommand{\mtdist}{\textsc{MultiTreeDist}\xspace}
\newcommand{\mtsample}{\textsc{MultiTreeSample}\xspace}
\newcommand{\mtopen}{\textsc{MultiTreeOpen}\xspace}
\newcommand{\mt}{\textsc{MultiTree}\xspace}
\newcommand{\PhiLSH}{\ensuremath{\Phi_{\textrm{LSH}}}\xspace}
\newcommand{\tdist}{\textsc{TreeDist}\xspace}
\newcommand{\mtinit}{\textsc{MultiTreeInit}\xspace}

\def\fullversion{1}

\title{Fast and Accurate $k$-means++ via Rejection Sampling}
\author{
	Vincent Cohen-Addad\thanks{Equal contribution}  \\
	Google Research \\
	\texttt{cohenaddad@google.com} \\
	\And
	Silvio Lattanzi\footnotemark[1] \\
	Google Research\\
	\texttt{silviol@google.com} \\
	\AND
	Ashkan Norouzi-Fard\footnotemark[1] \\
	Google Research \\
	\texttt{ashkannorouzi@google.com} \\
	 \And
	Christian Sohler\footnotemark[1] \thanks{Work was partially done while author was visiting researcher at Google Research, Switzerland.}\\
	University of Cologne\\
	\texttt{csohler@uni-koeln.de} \\
	\And
	Ola Svensson\footnotemark[1] \\
	EPFL \\
	\texttt{ola.svensson@epfl.ch} \\
}
\begin{document}

\maketitle

\begin{abstract}
$k$-means++ \cite{arthur2007k} is a widely used clustering algorithm that is easy to implement, has nice
theoretical guarantees and strong empirical performance. Despite its wide adoption, $k$-means++ sometimes suffers from being slow  
on large data-sets so a natural question has been to obtain more  efficient algorithms with similar guarantees. In this paper,
we present  a near linear time algorithm for $k$-means++ seeding. Interestingly our algorithm obtains the same 
theoretical guarantees as $k$-means++ and significantly improves earlier results on fast $k$-means++ seeding. 
Moreover, we show empirically that our algorithm is significantly faster than $k$-means++ and
obtains solutions of equivalent quality.
\end{abstract}

\section{Introduction}
Clustering is a fundamental task in machine learning with many applications in data analysis and machine learning. One particularly
important variant is $k$-means clustering: Given a set of $n$ points in $\mathbb{R}^d$ the goal is to find a partition of the points into $k$ clusters such that the sum of squared distance to the cluster centers (which are the means of the clusters) is minimized. 

A popular method to compute a good clustering with respect to the 
$k$-means objective is the \kmpp algorithm \cite{arthur2007k}. The algorithm is a combination of
a randomized procedure for finding a set of $k$ starting centers (often referred to as the \emph{seeding}) with the classic local improvement algorithm by Lloyd \cite{L82}.
The seeding step samples the first center uniformly at random. In the remaining 
iterations the algorithm samples a center from the $D^2$-distribution, where the
probability of sampling a point is proportional to the squared distance to the current set
of centers.

The \kmpp algorithm is easy to implement, has strong theoretical guarantees (an $O(\log k)$ approximation guarantee), and performs empirically well. However, the running time of $\Theta(dnk)$\footnote{Even assuming a constant number of Lloyd's algorithm steps.} becomes impractical for many very large datasets. Therefore, a lot of previous work focused on speeding up the \kmpp seeding \cite{NIPS2016_6478,BLHK16} as well as Lloyd's algorithm \cite{ding2015yinyang,bottesch2016speeding,newling2016fast,curtin2017dual}. 

To obtain a fast seeding algorithm, Bachem et al. \cite{NIPS2016_6478} and \cite{BLHK16} use an MCMC algorithm to generate a set of $k$ centers that follows the \kmpp distribution. They provide different versions of their algorithm that provide trade-offs between theoretical guarantees and empirical running time. Interestingly, under certain assumptions on
the inputs and when $k$ is small, their algorithm may even run in sublinear time in the input size.
However, all versions of their algorithms have a running time of $\Omega(k^2)$\footnote{We note that
a similar running time can be achieved also via coresets~\cite{har2004coresets, chen2009coresets} but it is challenging to go below the $\Omega(k^2)$ barrier.} and so it does not scale for massive datasets and moderate values of $k$ (i.e. $500-1000$). Another important drawback from their results is that the guarantees on the quality of the solution output by their algorithms are  weaker than the original \kmpp guarantee (since their approximation is additive in the worst case). 

\paragraph{Our contribution}  In this paper we present a new algorithm that provably achieves near-linear running time while offering similar guarantees as the original \kmpp algorithm. In particular: 
\begin{itemize}
    \item We introduce a new seeding algorithm that for constant $\varepsilon>0$ has near-linear running time $\widetilde{O}\left(nd +  \left(n\log(\Delta)\right)^{1+\varepsilon}\right)$ and returns a $O_\varepsilon(\log k)$ approximate solution, where $n$ is the number of points in the dataset and $\Delta$
    is the ratio between the maximum distance and the minimum distance between two points in the dataset, see Corollary~\ref{cor:main}. Our algorithm also has  the advantage that, in the stated running time, it computes the solution for \emph{all} values of $k=1,2,\ldots, n$. 
    \item  We compare 
    the performances of our seeding technique with the baselines \kmpp 
    and \kmc from~\cite{NIPS2016_6478} on various classic datasets. Our algorithms outperform the baselines even for moderate 
    values of $k$ (e.g.: $k= 1000$) and the speed-up is an order of magnitude for larger values of $k$\footnote{While the large $k$ setting is not the most studied setting,
    it still has many practical applications. For instance in  spam and abuse~\cite{qian2010case, sheikhalishahi2015fast}, 
    near-duplicate detection~\cite{DBLP:journals/pvldb/HassanzadehCML09}, compression or reconciliation 
    tasks~\cite{DBLP:conf/edbt/RostamiSPR18}. Furthermore the large $k$ case is very interesting from a
    theoretical perspective and it gained attention in recent years~\cite{bhaskara2018distributed}.}.
   In addition, our algorithms output solutions of similar costs as
    \kmpp (as our theoretical results predict).
\end{itemize}

The main idea behind our method is to use an embedding into a collection of
trees to approximate the distances between the input points, and then
leverage the tree structure to speed-up the $D^2$-sampling of \kmpp.
To ensure that our sampling, which uses the approximate tree distances, leads
to a solution that is competitive with respect to \kmpp on the \emph{original} data, we "emulate" the $D^2$-distribution on the original data by additionally using rejection sampling. 
More concretely,\\
-- We first develop a new seeding algorithm \fastkm that computes
    in $\widetilde{O}(nd)$ time a solution. The near linear running time is obtained by first approximating the
    squared Euclidean distance using a multi-tree embedding and then by showing that
    one can efficiently perform $D^2$-sampling with respect to multi-tree distances.\\
-- We then argue that one can use  our sampling technique on  multi-tree distances in combination with rejection sampling so as to reproduce the 
    same distribution as used by \kmpp on the original distances and so to achieve 
    the same approximation guarantees that \kmpp. To ensure a fast running time, we calculate the rejection probability by using 
 locality-sensitive hashing (LSH) to approximately determine the nearest neighbor w.r.t. the original distances. \\
-- Finally, we show that our LSH based rejection sampling algorithm computes a solution with the same 
    expected approximation guarantee of $O(\log k)$ as the basic $\kmpp$ algorithm.



\section{Preliminaries}  

\paragraph{Basic notation.} We denote by $P \subseteq \reald$ the set of $n$ input points in a $d$ dimensional space and let $\Delta$
    be the ratio between the maximum distance and the minimum distance between two point in the dataset. 
The Euclidean distance between two points $x,y \in \mathbb{R}^d$ is denoted by $\dist(x,y) = ||x-y||_2$. We also let $\dist(x,C)=\min_{y\in C} \dist(x,y)$ be the distance of $x$ to the closest point in $C$. The goal in the $k$-means problem is to choose a set of $k$ centers $C \subseteq \mathbb{R}^d$ minimizing
$
\sum_{x \in P}  \dist(x,C)^2.
$


\paragraph{Tree embeddings.} Tree embedding is a well-known technique used in many different clustering problems  (see for example \cite{B96}). 
\iffalse
Intuitively, in a tree embedding one assigns each point in $P$ to a leaf of a tree and weight of the edges (starting from $1$) increases by a factor two going toward the root. The distance between two points is defined as the shortest path between them.
We explain this procedure for the sake of completeness in 
\if\fullversion1
\cref{app:tree-embedding}.
\else
the full version.
\fi
\else
We now explain a  simple version that is similar to \cite{I04} that will be used in our algorithm. We first compute an  upper bound $\MaxDist$ on the maximum distance between two points within a factor of $2$.\footnote{This can be done in time $O(nd)$, by selecting any point and by computing the maximum distance between that point and any other point in the dataset. Then multiply this distance by 2.} Second, we  add a random shift $0 \leq s\leq \MaxDist$ to each coordinate of all input points\footnote{Notice that this does not effect the distance between any two points and therefore the cost of any solution.}. Let $x\in P$ be any point in the data set. The root of the tree (at height zero) represents an axis-aligned cube of side length $2 \MaxDist$ centered at $x$. By selection, note that all the input points are inside this cube and we say that they \emph{belong to} this node of the tree.
We then partition this cube into $2^d$ axis-aligned subcubes of side length $\MaxDist$ and assign each point to the one that contains its coordinates. For each of these subcubes that contains a point, we create a node and add it as a child of the root in the tree (so their height is one), with edge weight  $\sqrt{d}\MaxDist$, i.e., the side length of the (parent) cube times $\sqrt{d}/2$. Notice that this is equal to the half of the maximum distance between any two coordinates in the parent cube. Also observe that the number of nodes at height one is at most $n$, since each node contains at least one point. We let the height of these edges  be zero. We then repeat this operation on the nodes until \emph{every} cube contains at most a single point. This results in a tree where all leaves are at the same height, the height is at most $H = O(\log(d\Delta))$, and there are at most $n$ nodes in each layer. Moreover, the running time of constructing each layer is $O(nd)$ since for each point we can determine in which subcube it belongs by going over its dimensions. The total running time is thus $O(nd\log(d\Delta))$. The distance between two points $p, q$ in the tree, denoted by $\tdist(p,q)$, is the length of the shortest path between $p$ and $q$ in the tree, or equivalently twice the length from one of them to their lowest common ancestor. 

\fi

\section{Multi-tree Embedding}
\label{sec:multitree}
Tree embedding is a powerful tool for designing approximation algorithms but it cannot be applied directly to the $k$-means problem. In fact there are simple examples that show that the expected distortion between  the squared distances of an $\ell_2$ metric and the \tdist is $\Omega(n)$. To overcome this limitation, we use \emph{three} tree embeddings with different random shifts and we define the distance between two points (denoted by \mtdist) to be the minimum \tdist among the distances in the three trees.
Interestingly, we show that this suffices to get a significantly better upper bound on the distortion. We refer to this simple procedure as $\mtinit()$. We note that the running time of $\mtinit()$ is asymptotically equal to that of a single tree embedding   $O(n d \log(d\Delta))$ since it initializes three tree embeddings. 

To analyze the expected distortion, define for any set $S \subseteq P$ and point $p \in P$, $\mtdist(p,S) = \min_{q \in S} \mtdist(p,q).$ 
The proof of the following bounds is provided in 
\if\fullversion1
\cref{app:multitreeperf}.
\else
the full version.
\fi
\begin{restatable}{lemma}{lemmamultitree} \label{lemma:approxdist}
  For any point $p$, and set $S$, we have 
  $
     \dist(p, S)^2 \le \mtdist(p,S)^2 \quad \text{ and } \quad
  \EE[\mtdist(p,S)^2] \leq O(d^2 \cdot \dist(p, S)^2)\, .
 $
\end{restatable}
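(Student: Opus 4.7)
The plan is to handle the two inequalities separately: the lower bound on $\mtdist$ is deterministic and holds for every realization of the three trees, while the upper bound on $\EE[\mtdist^2]$ must leverage the independence of the three trees to cube a single-tree tail bound.

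For the deterministic lower bound, I would show that for each individual tree in the multi-tree embedding, $\tdist(p,q) \ge \dist(p,q)$ for every pair $p,q$. If their lowest common ancestor sits at height $h$ with cube side length $s_h$, then $\dist(p,q) \le \sqrt{d}\, s_h$ because both points lie in the LCA's cube. On the other hand, the two edges incident to the LCA on the $p$-$q$ path each carry weight $\sqrt{d}\, s_h/2$ (half the parent cube's diagonal contribution), so $\tdist(p,q) \ge \sqrt{d}\, s_h \ge \dist(p,q)$. Taking the minimum over the three trees and over $q \in S$ preserves the inequality, which yields $\dist(p,S)^2 \le \mtdist(p,S)^2$.

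For the expectation bound, fix $p$, let $q^* = \argmin_{q \in S}\dist(p,q)$, and write $D = \dist(p,q^*)$. Since $\mtdist(p,S) \le \mtdist(p,q^*)$, it is enough to show $\EE[\mtdist(p,q^*)^2] = O(d^2 D^2)$. The key technical step is the single-tree tail estimate
\[
\Pr\!\left[\tdist(p,q^*) > t\right] \le \min\bigl(1,\; C\, d\, D / t\bigr)
\]
for an absolute constant $C$. To derive it, observe that $\tdist(p,q^*) > t$ forces the LCA to lie at a height whose cube has side length at least $t/(2\sqrt{d})$, which in turn requires the random shifts to fail to separate $p$ from $q^*$ at the level just below. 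In coordinate $\ell$, separation at a level with cube side $s$ happens with probability at most $|p_\ell - q^*_\ell|/s$; summing over coordinates and applying Cauchy--Schwarz, so that $\sum_\ell |p_\ell - q^*_\ell| \le \sqrt{d}\, D$, produces a bound of $\sqrt{d}\, D / s$, which combined with the $1/\sqrt{d}$ in the side-length threshold gives the claimed $dD/t$ rate.

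Using the independence of the three tree embeddings then gives $\Pr[\mtdist(p,q^*) > t] = \Pr[\tdist(p,q^*) > t]^3 \le \min(1,\, (CdD)^3/t^3)$, and the layer-cake identity together with splitting the integral at $s_0 = (CdD)^2$ yields
\[
\EE\!\left[\mtdist(p,q^*)^2\right] = \int_0^{\infty} \Pr\!\left[\mtdist(p,q^*) > \sqrt{s}\right]^3 ds \le s_0 + \int_{s_0}^{\infty} \frac{(CdD)^3}{s^{3/2}}\, ds = O(d^2 D^2).
\]
The main obstacle is obtaining the tight $dD/t$ factor in the single-tree tail: a naive coordinate-wise union bound without the $\|\cdot\|_1 \le \sqrt{d}\,\|\cdot\|_2$ refinement loses an extra $\sqrt{d}$, which would degrade the final expectation to $O(d^3 D^2)$ and miss the rate claimed in the lemma. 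It is also instructive that exactly three trees are needed here, since with only two independent copies the analogous integral diverges logarithmically.
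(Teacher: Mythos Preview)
Your argument is correct and rests on the same two ingredients as the paper's proof: the single-tree separation bound $\Pr[\text{$p,q^*$ split at a level of side $s$}]\le \sqrt{d}\,D/s$ (obtained via the same $\|\cdot\|_1\le\sqrt{d}\,\|\cdot\|_2$ step you highlight), and the independence of the three trees, which cubes this tail. The paper packages the computation as a discrete sum over levels,
\[
\EE[\mtdist(p,q)^2]\;\le\;3\sum_i s_i\Bigl(\sum_{j\le i}s_j\Bigr)^{2}\bigl(\tdist_i\bigr)^2,
\]
while you use the equivalent tail bound $\Pr[\tdist>t]\le O(dD/t)$ and integrate; these are in direct correspondence, and your remark that two trees would make the integral diverge logarithmically is exactly what the paper's level sum would show as well.

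Two small slips to clean up. First, the clause ``fail to separate $p$ from $q^*$ at the level just below'' is inverted: large $\tdist$ forces the LCA near the root, which means $p$ and $q^*$ \emph{are} separated at the level with side just below the threshold; the separation probability you then bound is the right quantity. Second, in the displayed layer-cake identity the exponent $3$ is misplaced: it should read $\int_0^\infty \Pr[\mtdist(p,q^*)>\sqrt{s}]\,ds$, which by independence equals $\int_0^\infty \Pr[\tdist(p,q^*)>\sqrt{s}]^3\,ds$. Your subsequent bound $(CdD)^3/s^{3/2}$ and the evaluation are correct as written.
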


\section{\fastkm Algorithm} \label{sec:fastkmeans}
Recall that the classic  \kmpp algorithm  samples the first center uniformly at random and in the remaining 
iterations $k$-means++  samples a center from the $D^2$-distribution, where the
probability of each point is proportional to the squared distance to its nearest current center. The most 
expensive operation in this procedure is to update the $D^2$-distribution after each sample. In fact,
the probability for a point to be selected may change in every round of the algorithm leading to $n$ updates in each of the $k$ iterations.

Our key idea here is to use the special structure of the  the multi-tree embedding to update the $D^2$-distribution with respect to those distances efficiently. This is intuitively possible since in the multi-tree metric every node can change its distance from the current set
of centers at most $O(\log(d\Delta))$ times. This is true because in order to decrease the distance between
a point $x$ and the set of centers in a single tree embedding,  the lowest common ancestor between $x$ and the closest center
has to get closer to $x$. The number of times that this can happen is bounded by the height of the tree.  Therefore, since the multi-tree embedding consists of three trees of height $O(\log(d\Delta))$, we have that the number of times a point can change its multi-tree distance to the set of opened centers is at most $O(\log(d\Delta))$.

\paragraph{\mtopen and \mtsample.}
To describe our algorithm we start by defining the procedures to update the distribution, 
\mtopen, and to compute a sample \mtsample. To achieve an efficient running time, both procedures act on a common data structure which consists of the following:
\begin{itemize}
    \item A weight $w_x$ for each point $x\in P$.
    \item A node-weighted balanced binary tree with a leaf for each of the $n$ points in $P$. We refer to this tree as the sample-tree so as to not confuse it with the trees in the multi-tree embedding. 
    \item For each node in each of  the trees in the multi-tree embedding, a bit saying whether this node is marked.
\end{itemize} 

For notational convenience, let $\mtdist(x, \emptyset)^2 = M$ for any point $x\in P$, where $M=16d\cdot \MaxDist^2$ is chosen to be an upper bound of  $\mtdist(p,q)^2$ for any two points $p$ and $q$.  
 If we let $S$ be the set of opened points (using calls to \mtopen), the data structure will satisfy the following invariants:
\begin{enumerate}
    \item For every $x\in P$, $w_x = \mtdist(x,S)^2$.
    \item The weight of each node in the sample-tree equals the sum of the weights of the points corresponding to the leaves in its subtree.
    \item A node $v$ in a tree $T$ in the multi-tree embedding is marked if there is a point in its subtree that has been opened, i.e., is in $S$; otherwise it is in unmarked. 
\end{enumerate}

So the data structure is initialized (when $S = \emptyset$) by  setting all weights $(w_x)_{x\in P}$ to $M$; setting the weight of each node in the sample-tree to $M$ times the number of points in its subtree; and by letting all nodes in the trees of the multi-tree embedding to be unmarked. In addition, for each tree $T$ in the multi-tree embedding and  for each node $v$ in $T$, we compute the set $P_T(v) \subseteq P$ of points in its subtree.  Note that the initialization of the weights and the sample-tree run in time $O(n)$ whereas the initialization of the unmarked notes and the sets $P_T(v)$ can be computed in time $O(n \log(d\Delta))$ by traversing the trees in the multi-tree embedding of height $O(\log(d\Delta))$. The total runtime of the initialization is thus $O(n \log(d\Delta))$.

We proceed to describe the procedure $\mtopen$ that opens a new point $x$ and updates the data structure to satisfy the invariants. We then describe the simpler procedure $\mtsample$ which samples a point $x$ with probability $w_x/(\sum_{y\in P} w(y))$, i.e., from the $D^2$-distribution with respect to the multi-tree distances. 

\begin{wrapfigure}[20]{R}{0.5\textwidth}
\begin{minipage}{0.5\textwidth}
\vspace{-3.1em}
\begin{algorithm}[H]
  \caption{\mtopen \label{alg:mtopen}}
  \vspace{0.1cm}
  \begin{algorithmic}[1]
    \REQUIRE A point $x\in P$ 
    \FOR {each tree $T$ in the multi-tree embedding}
    \STATE Let $v_0$ be the leaf of $T$ that $x$ belongs to. \label{mtopen:step:1}
    \STATE Traverse the tree towards the root forming a path $v_0, v_1, \ldots, v_\ell$ until either $v_\ell$ is the root or the parent of $v_\ell$ is marked.\label{mtopen:step:2}
    \STATE Mark $v_0, \ldots, v_\ell$. \label{mtopen:mark} 
    \FOR {each point $y$ in $P_T(v_\ell)$} \label{mtopen:for}
	    \IF {$\tdist_T(y,x)^2 < w_y$}
      	  \STATE $w_y \leftarrow \tdist^2_T(y,x)$
      	  \STATE Traverse the sample-tree from the leaf corresponding to $y$ to the root to update the node-weights   that depend on $w_y$.  \label{mtopen:sampletree}
      	  \ENDIF
      	  \ENDFOR
      	  \ENDFOR
  \end{algorithmic}
\end{algorithm} 
\end{minipage}
\end{wrapfigure}
The description of $\mtopen$ is given in \cref{alg:mtopen}. When the tree embedding is not clear from the context,  we use the notation $\tdist_T$ to denote the distances given by the tree embedding $T$. We now verify the invariants and give some intuition of the procedure. 
Let $S$ be the set of opened centers prior to this call to $\mtopen(x)$ and let $T$ be a tree  in the multi-tree embedding. When considering $T$, \mtopen starts in the leaf $v_0$ of $T$ that $x$ belongs to. It then traverses the tree towards the root forming a path $v_0, v_1, \ldots, v_\ell$ of nodes so that $v_\ell$ is either the root or its parent is already marked. 
The subtrees of these vertices are exactly those that contain $x$ but no other point in $S$, and so  Step~$4$ guarantees the third invariant. Now a key observation is that $\tdist_T(y, S \cup \{x\}) < \tdist_T(y, S)$ for exactly those points $y$ in $P_T(v_\ell)$. This holds because in order to decrease the distance between a point $y$ and the set of centers, with respect to the tree embedding $T$, the lowest common ancestor in $T$ between y and the closest center must get closer.    \mtopen considers each of these points and updates $w_y$ if $\tdist_T(y, x) < w_y$.  Since the procedure considers all three trees in the multi-tree embedding this guarantees the first invariant, i.e.,  that $w_y = \mtdist(y, S \cup \{x\})^2$  for every $y\in P$ at the end of the procedure.  The second invariant is guaranteed by Step~8 which updates all the nodes in the sample-tree so as to satisfy that invariant.  \mtopen therefore updates the data structure to satisfy the invariants. As the distance from a point $x$ to the centers is updated $O(\log(d\Delta))$ times and each time  the sample-tree is updated in time $O(\log n)$ (its height), we have the following running time (see 
\if\fullversion1
\cref{app:fastkmeansperf} for a formal argument).
\else
the full version for a formal argument).
\fi

\begin{restatable}{lemma}{treeopenrunningtime} \label{lemma:mtopen-fast-runtime}
The running time of opening any set $S$ of $k$ points (using calls to \mtopen) is  $O(n \log(d\Delta) \log n)$.
\end{restatable}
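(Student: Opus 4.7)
The plan is to charge the total work of $k$ calls to \mtopen to three kinds of operations inside \cref{alg:mtopen}: (a) the upward traversal and marking on Lines~2--4; (b) the enumeration of points in $P_T(v_\ell)$ on Line~5 together with the distance comparison on Line~6; and (c) the sample-tree updates on Line~8. I will bound each of these across the full execution and sum them up to obtain the claimed $O(n\log(d\Delta)\log n)$.

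For (a), the central fact is that every node in every tree of the multi-tree embedding is marked at most once throughout the entire run, and within a single call the traversal at Lines~2--3 visits only the nodes that it is about to mark (plus at most one already-marked parent of $v_\ell$). Since each of the three trees has at most $n$ nodes per layer and height $O(\log(d\Delta))$, the three trees together contain $O(n \log(d\Delta))$ nodes, which directly bounds the aggregate work of (a).

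The main step is (b), which rests on the following structural claim: for every fixed point $y \in P$ and every fixed tree $T$, the nodes $v_\ell$ with $y \in P_T(v_\ell)$ across successive calls to \mtopen form a strictly descending chain of ancestors of $y$ in $T$. The reason is that after such a visit the node $v_\ell$ becomes marked on Line~4; any subsequent call enumerating $y$ selects a node $v_{\ell'}'$ that is unmarked but whose parent is marked (or is the root), and $y \in P_T(v_{\ell'}')$ then forces $v_{\ell'}'$ to be a strict descendant of $v_\ell$. Since $T$ has depth $O(\log(d\Delta))$, each point $y$ is enumerated at most $O(\log(d\Delta))$ times per tree and hence $O(\log(d\Delta))$ times in total over the three trees. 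Assuming constant-time LCA queries in each tree after $O(n\log(d\Delta))$ preprocessing, each enumerated pair $(y,x)$ costs $O(1)$ on Lines~6--7, so the aggregate cost of (b) is $O(n\log(d\Delta))$.

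For (c), a sample-tree update fires only when $w_y$ strictly decreases on Line~7, and the same descending-chain argument bounds the number of such decreases by $O(\log(d\Delta))$ per point. Each update walks from a leaf of the balanced sample-tree to the root in $O(\log n)$ time, so the total contribution of (c) is $O(n\log(d\Delta)\log n)$, which dominates (a) and (b) and proves the lemma. The principal obstacle is the descending-chain claim in (b); once that is in hand, everything else is a straightforward charging argument.
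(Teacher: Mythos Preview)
Your proof is correct and follows essentially the same charging argument as the paper: both bound the total number of point enumerations in the for-loop by $O(n\log(d\Delta))$ via the one-time marking of tree nodes, and then multiply by the $O(\log n)$ cost of a sample-tree update. Your per-point ``descending chain'' view and the paper's per-node bound $\sum_v |P_T(v)| = O(n\log(d\Delta))$ are dual formulations of the same count. One small difference worth noting: you invoke constant-time LCA queries to evaluate $\tdist_T(y,x)$ on Line~6, whereas the paper avoids this assumption by observing that if one processes the points of $P_T(v_\ell)$ layer by layer (first $P_T(v_0)$, then $P_T(v_1)\setminus P_T(v_0)$, and so on), then $\tdist_T(y,x)$ is constant on each layer and given directly by the path length from $v_0$ to $v_i$, so all distances are computed in $O(\log(d\Delta))$ time per call with no LCA machinery.
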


Having described how to open a new center, we proceed to describe the simpler algorithm for generating a sample. The pseudo-code of  \mtsample is given in~\cref{alg:mtsample}. \mtsample traverses  the sample-tree from the root to a random leaf by, at each intermediate node, randomly choosing one of its two children proportional to its weight. As the weight of each node in the sample-tree, equals the sum of weights of the points in its subtree (by the second invariant), this guarantees that a point $x$ is sampled with probability  $w_x/ \sum_{y\in P} w_y$, i.e., proportional to its weight. By the first invariant, this corresponds to sampling from the $D^2$-distribution with respect to the multi-tree distances\footnote{We remark that the idea of sampling in 
this way from a tree has been used in the context of constructing a coreset in \cite{AMRSLS12} 
(however, their tree depends on a partition of the data and is not necessarily balanced).}. Furthermore, the running time  of $\mtsample$ is $O(\log n)$ since the height of the sample-tree is $O(\log n)$. (Recall that the sample-tree is a balanced binary tree with $n$ leafs and is thus of height $O(\log n)$. Recall also that the sample-tree is a different tree from the tree embeddings)  We summarize these properties of $\mtsample$ in the following lemma.
\begin{minipage}{\textwidth}
\begin{minipage}{0.55\textwidth}
\begin{algorithm}[H]
  \caption{\mtsample}
  \label{alg:mtsample}
  \vspace{0.1cm}
  \begin{algorithmic}[1]
    \STATE Let $v$ be the root of the sample-tree.
    \WHILE {$v$ is not a leaf} 
    	\STATE Let $w(L)$ and $w(R)$ be the weight of its left and right child, respectively.
    	\STATE Update $v$ to be its left child with probability $\frac{w(L)}{w(L) + w(R)}$ and to be its right child with remaining probability $\frac{w(R)}{w(L) + w(R)}$.
    \ENDWHILE
    \ENSURE the point $x$ corresponding to the leaf $v$.
  \end{algorithmic}
\end{algorithm} 
\end{minipage}
\begin{minipage}{0.45\textwidth}
\begin{algorithm}[H]
  \caption{\fastkm}
  \label{alg:fastkm}
  \vspace{0.1cm}
  \begin{algorithmic}[1]
    \REQUIRE Set of points $P$, number of centers $k$. 
    \STATE Set $S \leftarrow \emptyset$ 
    \STATE \mtinit() 
    \WHILE {$|S| < k$} 
    	\STATE $x \leftarrow \mtsample()$ 
	  \STATE $S \leftarrow S \cup x$ 
      	  \STATE $\mtopen(x)$ 
    \ENDWHILE
    \ENSURE $S$
  \end{algorithmic}
\end{algorithm} 
\end{minipage}
\end{minipage}
\begin{lemma} \label[lemma]{lemma:mtsample-fast-runtime}
Let $S$ be the set of opened centers (using calls to $\mtopen$). Then $\mtsample$ runs in time $O(\log n)$ and each point $x\in P$ is output  with probability 
$
    \frac{\mtdist(x,S)^2}{\sum_{y\in P} \mtdist(y,S)^2}\,.
$
\end{lemma}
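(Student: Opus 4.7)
The plan is to verify the two claims of the lemma separately, with the probability statement relying on the invariants maintained by \mtopen (established earlier in the section) and the running time following directly from the balanced structure of the sample-tree.

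For the running time, I would simply observe that the sample-tree is a balanced binary tree over $n$ leaves (one per point in $P$) and so its height is $O(\log n)$. Each iteration of the \textbf{while} loop performs a constant amount of work: reading the weights $w(L)$ and $w(R)$ stored at the two children of the current node $v$, drawing a single biased coin, and descending to one child. Since the loop runs at most once per level of the tree, the total running time is $O(\log n)$.

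For the probability statement, the plan is to prove by induction on the depth $h$ of a node $v$ in the sample-tree that $v$ is reached with probability exactly $w(v)/w(r)$, where $r$ is the root and $w(\cdot)$ denotes the stored node weight. The base case $h=0$ is trivial since the algorithm starts at the root. For the inductive step, if $v$ has parent $u$ with children $L$ (equal to $v$) and $R$, then by induction $u$ is reached with probability $w(u)/w(r)$; conditional on reaching $u$, the algorithm descends to $v=L$ with probability $w(L)/(w(L)+w(R))$. By the second invariant maintained by \mtopen, $w(u) = w(L) + w(R)$, so the unconditional probability of reaching $v$ is $w(L)/w(r)$, as required. Applying this to a leaf $x$ gives that $x$ is output with probability $w_x/\sum_{y\in P} w_y$, and then invoking the first invariant $w_y = \mtdist(y,S)^2$ yields the stated expression.

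The main (mild) subtlety is just to be careful that the invariants are in force at the moment \mtsample is invoked; but since \mtsample itself does not modify the data structure and the invariants are preserved by every preceding call to \mtopen, this is immediate. I do not expect any serious obstacle here — the whole argument is essentially a textbook sampling-from-a-weighted-binary-tree computation, and the only nontrivial ingredient is borrowing the two invariants already proved for \mtopen.
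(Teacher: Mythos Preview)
Your proposal is correct and follows essentially the same approach as the paper: the paper argues (in the text immediately preceding the lemma rather than in a separate proof) that the balanced sample-tree has height $O(\log n)$, that the second invariant ensures a leaf $x$ is reached with probability $w_x/\sum_{y\in P} w_y$, and that the first invariant then identifies $w_y$ with $\mtdist(y,S)^2$. Your explicit induction on depth merely spells out what the paper leaves implicit.
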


\paragraph{\fastkm.} We can now present a fast algorithm for the $k$-means problem (see \cref{alg:fastkm}) that samples each center from the $D^2$-distribution with respect to the distances given by the  multi-tree embedding. In the next section we show how to adapt the procedure so as to sample from the original $D^2$-distribution by using rejection sampling. 
The running time directly follows from that, the time to initialize the multi-tree embedding is $O(n d \log(d\Delta))$, the time to initialize the data structure used by \mtopen and \mtsample is $O(n \log(d\Delta))$, the total running time of $\mtopen$ is $O(n \log(d\Delta) \log n)$ (\cref{lemma:mtopen-fast-runtime}) and the running time of each call to \mtsample is $O(\log n)$ (\cref{lemma:mtsample-fast-runtime}).
\begin{corollary}
The running time of \fastkm is ${O}(nd\log(d\Delta) + n \log(d\Delta) \log n)$.
\end{corollary}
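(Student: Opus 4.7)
The plan is to simply aggregate the running-time contributions of the four components that make up \fastkm, all of which have already been accounted for earlier in the excerpt. Concretely, I would partition the total cost into: (i) the call to \mtinit(), (ii) the initialization of the shared data structure (weights, sample-tree, marks, and the sets $P_T(v)$), (iii) all invocations of \mtopen over the course of the algorithm, and (iv) all invocations of \mtsample.

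For (i), the excerpt already observes that building three tree embeddings costs $O(nd\log(d\Delta))$, since each single tree takes that time (with height at most $H = O(\log(d\Delta))$ and $O(nd)$ work per layer). For (ii), the weights on points and sample-tree nodes take $O(n)$ time to set up, and traversing each tree of the multi-tree embedding to compute the sets $P_T(v)$ costs $O(n\log(d\Delta))$, so this step is dominated by (i). For (iii), \cref{lemma:mtopen-fast-runtime} directly bounds the total cost of opening any $k \le n$ points by $O(n\log(d\Delta)\log n)$; I would emphasise that this is an aggregate bound across all openings, not a per-call bound, and hence it is what we should plug in. For (iv), \cref{lemma:mtsample-fast-runtime} gives $O(\log n)$ per call, so the $k \le n$ calls contribute at most $O(n\log n)$, which is absorbed by (iii).

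Adding the four contributions yields
\[
O\bigl(nd\log(d\Delta)\bigr) + O\bigl(n\log(d\Delta)\bigr) + O\bigl(n\log(d\Delta)\log n\bigr) + O(n\log n),
\]
and collapsing dominated terms gives the claimed $O\bigl(nd\log(d\Delta) + n\log(d\Delta)\log n\bigr)$.

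There is really no substantive obstacle here: the proof is a bookkeeping exercise that relies entirely on previously stated results. The only subtlety worth flagging is that one must use the amortised bound of \cref{lemma:mtopen-fast-runtime} rather than multiplying a per-call cost by $k$, because the $O(\log(d\Delta))$ bound on the number of distance updates per point is a global bound over the whole execution, not a per-iteration one.
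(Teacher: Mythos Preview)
Your proposal is correct and mirrors the paper's own argument essentially verbatim: the paper likewise sums the contributions of \mtinit, the data-structure initialization, the aggregate cost of all \mtopen calls (via \cref{lemma:mtopen-fast-runtime}), and the \mtsample calls (via \cref{lemma:mtsample-fast-runtime}), then collapses dominated terms. Your remark about using the amortised bound for \mtopen is exactly the right point to highlight.
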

\section{Rejection Sampling Algorithm} \label{sec:rej-samp}

In this section we present an algorithm, \rejsam, that efficiently samples arbitrarily close to the $D^2$-distribution in the original metric. 
The algorithm is rather simple and its pseudo-code is given in \cref{alg:rejsam}. 
The main idea is to use the multi-tree embedding to sample candidate centers but then adjust the sampling probability
using rejection sampling.

As for \kmpp, the first center that we pick is chosen uniformly at random among all the points. 
For the rest of the $k-1$ centers, the idea is to sample a point $x$ using $\mtsample$, i.e., form the $D^2$-distribution with respect to the multi-tree distances. 
Then we open $x$ as a new center with probability proportional to its actual distance to the set of centers in the original metric over the distance in the multi-tree embedding. 
We repeat this procedure until we pick the rest of the $k-1$ centers. 
Interestingly, this rejection-sampling procedure guarantees that we sample each of the centers according to the actual $D^2$-distribution. 
However, the running time of this procedure is of $\Omega(k^2)$ since, for each point $x$ that we sample from the multi-tree, we have to find the closest open center which takes time $\Omega(k)$. In order to improve this running time, we use an approximate nearest neighbor data structure to approximate the distance between $x$ and the closest open center. 
This enables us to improve the running time to be near linear. The data structure that we use is based on the locality-sensitive hash (LSH) functions  developed for Euclidean metrics~\cite{andoni2006near}. 
 We only need to slightly modify their data structure to guarantee  monotonicity  as we explain in 
\if\fullversion1
\cref{sec:LSH-data-structure}.
\else
the full version.
\fi
\begin{theorem}[LSH data structure]
    For any set $P$ of $n$ points in $R^d$ and
    any parameter $c > 1$, there exists a data structure with operations Insert and Query that,  with probability at least $1-1/n$, have the following guarantees:
        (i) Insert($p$): \emph{Inserts} point $p\in P$ to the data structure    in time 
       ${O}\left(d \log(\Delta) \cdot \left(n\log(\Delta)\right)^{O(1/c^2)}\right)$. 
        (ii) Query($p$): Returns a point $q$ that has been inserted into the data structure that is at distance at most $c \cdot \delta$ from $p$, where $\delta$ is the minimum distance from $p$ to a point inserted to the data structure. The query
        time is ${O}\left(d\log(\Delta)\cdot \left(n\log(\Delta)\right)^{O(1/c^2)}\right)$.
        \\
     Furthermore, the data structure  is \emph{monotone under insertions}: the distance between $p$ and  Query($p$) is non-increasing after inserting more points.
     \label{thm:data_structure}
\end{theorem}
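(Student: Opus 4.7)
The starting point is the classical locality-sensitive hashing data structure of Andoni and Indyk for Euclidean distances, which for a fixed target radius $r$ solves the $(r, cr)$-near neighbor problem with insertion and query time $(dn)^{O(1/c^2)}$. To handle arbitrary query distances, I would instantiate $H = O(\log\Delta)$ copies of this structure at geometrically spaced scales $r_i = 2^i \cdot r_{\min}$ for $i = 0, 1, \ldots, H-1$, so that every possible pairwise distance in $P$ lies in $[r_j, 2 r_j)$ for some scale. Insert($p$) inserts $p$ into all $H$ copies; Query($p$) probes each copy and returns the closest candidate observed overall.

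The stated $1 - 1/n$ global success probability follows by standard amplification: use $O(\log(nH))$ independent hash-table repetitions per level and union-bound over all scales and all $n$ operations. This amplification, together with the $H$ copies, inflates the per-operation cost of Andoni-Indyk only by factors of $\log\Delta$ and $\log n$, which match the $\log\Delta$ factors absorbed in the claimed running time $O(d\log(\Delta) \cdot (n\log\Delta)^{O(1/c^2)})$.

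The only genuinely new ingredient is monotonicity, which I would obtain through a small change in how candidates are harvested. At level $i$, the structure holds $L = (n\log\Delta)^{O(1/c^2)}$ hash tables; I would modify the query so that, at each level $i$, it iterates through \emph{every} point colliding with $p$ in any of the $L$ tables, computes the true Euclidean distance to each, and keeps the best. The overall answer is the closest such collision across all levels. Crucially, this selection rule depends only on the (fixed at construction) hash functions and on the current set of inserted points, and never on a stopping rule that could cause the query to miss a previously-viable answer. Hence when a new point $q$ is inserted, the collision set of $p$ at each level can only grow, and thus the minimum-distance candidate returned by Query($p$) is non-increasing in the sequence of insertions, giving the monotonicity guarantee.

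The main obstacle will be verifying that the monotone, "inspect all collisions" rule still respects the claimed running time: as more points are inserted, a bucket containing $p$ could in principle accumulate many far collisions. I would control this by invoking the standard Andoni-Indyk collision-probability calculation at the level $r_i$ matched to the current true nearest-neighbor distance: with their parameter choices, the expected number of collisions at distance $> c r_i$ per table is $O(1)$, so across $L$ tables and $H$ levels the total expected candidate count is $(n\log\Delta)^{O(1/c^2)}$. A Chernoff bound followed by a union bound over the (at most $n$) operations upgrades this to a high-probability bound, concentrating the per-query work at the claimed rate and, combined with the multi-scale wrapper and the monotone collision rule, yielding the full theorem.
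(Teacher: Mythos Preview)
Your multi-scale wrapper and amplification scheme match the paper's construction essentially verbatim. The difference lies in how you enforce monotonicity, and that difference creates a genuine running-time gap.

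You propose to examine \emph{every} collision of $p$ at every level and return the global minimum. Monotonicity is then immediate, but the running-time argument you sketch only controls collisions at the single level $r_i$ matched to the current nearest-neighbor distance. You still query all $O(\log\Delta)$ levels, and at coarse levels, where $r_i$ is comparable to the diameter, essentially all inserted points are at distance $\leq c r_i$ from $p$ and hence are ``near'' rather than ``far.'' The Andoni--Indyk calculation bounds only far collisions; near points collide with probability roughly $p_1^m \approx 1/L$ per table, so across $L$ tables you expect $\Theta(n)$ total collisions at that level. Your query therefore does $\Omega(n)$ work at coarse scales, which exceeds the claimed $(n\log\Delta)^{O(1/c^2)}$ budget. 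Neither Chernoff nor a union bound helps, since the expectation itself is already too large.

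The paper avoids this with a different (and rather delicate) monotonicity mechanism inside the gap data structure: Insert always \emph{appends to the end} of each bucket's linked list, and Query, in each bucket, scans from the front and takes the \emph{first} element within distance $cR$ (then returns the closest among these $\leq L$ candidates). Monotonicity holds because a newly inserted point, being appended at the end, can never displace a previously selected candidate in any bucket; the candidate set can only grow. For running time, the paper sets the concatenation length $m$ aggressively enough that, with high probability, there are \emph{no} far collisions at all, so the first element in every nonempty bucket is already within $cR$ and the scan stops after inspecting a single point per bucket. This keeps the query cost at $O(L)$ distance computations per level regardless of how many near points the bucket contains, which is exactly what your ``inspect all collisions'' rule fails to guarantee.
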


We say that the data structure is successful if the above guarantees hold. By the theorem statement, we know that the data structure is successful with probability at least $1-1/n$. The small failure probability will not impact the expected cost of our solution\footnote{To be completely formal: if we repeat our algorithm for $\log_n(4n\Delta^2 )$ times, then we know that with probability at least $1-1/(4n\Delta^2 )$ one of the runs is with a successful data structure. As squared-distances are at most $\MaxDist^2$ and at least $\MaxDist^2/(2\Delta)^2$, the total cost of a solution with a single opened center is at most $n\cdot \MaxDist^2$. Therefore,  the small failure probability of $1/(4n\Delta^2 )$ will not have a measurable impact on the expected cost of the best found clustering.}. 
We therefore  \emph{assume throughout the analysis that our data structure is successful}. In \cref{alg:rejsam} we present the pseudocode for our algorithm.

\begin{wrapfigure}[16]{R}{0.5\textwidth}
\begin{minipage}{0.5\textwidth}
\vspace{-2.1em}
\begin{algorithm}[H]
  \caption{\rejsam}
  \label{alg:rejsam}
  \vspace{0.1cm}
  \begin{algorithmic}[1]
    \REQUIRE Set of points $P$, number of centers $k$
    \STATE Set $S \leftarrow \emptyset$ 
    \STATE \mtinit()
    \WHILE {$|S| < k$}
    	\STATE $x \leftarrow \mtsample()$
    	\STATE {\bf{With probability}} $\min\{1,\frac{\dist(x, \text{Query}(x))^2}{c^2\cdot\mtdist(x, S)^2}\}$ \algorithmicdo
  	    \STATE ${}$\hspace{1em} $S \leftarrow S \cup x$ 
  	    \STATE ${}$\hspace{1em} \mtopen$(x)$ 
        \STATE ${}$\hspace{1em} Insert(x)  
    \ENDWHILE
    \ENSURE $S$
  \end{algorithmic}
\end{algorithm} 
\end{minipage}
\end{wrapfigure}
In the \rejsam algorithm (\cref{alg:rejsam}), the probability on Line $5$ is not defined for the case that $S$ is an empty set, i.e., the first iteration of the loop. In this case we assume that this probability is one and the sampled element will be added to $S$. We start be presenting a few properties of \rejsam algorithm.
%
%
We show that the expected number of the times that the loop (Line $3$) repeats is ${O}(c^2d^2 k)$.
To that end, we first show that the probability of opening a center in $x$ in any iteration is independent of the \mt embedding and only depends on the LSH data structure. 
This holds, intuitively, because when we sample a point $x$ by calling $\mtsample()$ we then decide to add it based on the distance to the point reported by the LSH data structure which removes the dependency on \mtinit. Specifically, each point $x$ is first sampled w.p. $\frac{\mtdist(x, S)^2}{\sum_{y\in P} \mtdist(y, S)^2}$ and then added to set $S$ w.p. $\frac{\dist(x, \text{Query}(x))^2}{c^2\cdot\mtdist(x, S)^2}$. Therefore, the probability of adding $x$ to $S$ is proportional to $\dist(x, \text{Query}(x))^2$ and we get (see
\if\fullversion1
\cref{app:rejsamperf}
\else
the full version
\fi
for a formal proof):
\begin{restatable}{lemma}{rejsamlemma} \label{lemma:dist}
The probability of inserting a point $x$ to set $S$ in \rejsam algorithm is independent of \mtinit and is equal to $1/n$ for the first iteration and  $\frac{\dist(x, \text{Query}(x))^2}{\sum_{y\in P} \dist(y, \text{Query}(y))^2}$ for other iterations.
\end{restatable}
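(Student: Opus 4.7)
The plan is to verify the claim by examining a single iteration of the while loop and factoring the joint probability of (i) the candidate $x$ being produced by $\mtsample$ and (ii) the candidate being accepted. First, I would handle the first iteration: when $S = \emptyset$ the weights in the sample-tree all equal $M$, so $\mtsample$ returns a uniform point and the acceptance probability is set to $1$ by convention. Hence each $x$ is inserted with probability $1/n$, and this is patently independent of the random shifts chosen inside $\mtinit$.

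For a subsequent iteration, I would first argue that the $\min$ on Line~5 is always realized by the second argument (given the LSH data structure is successful). Indeed, since the points inserted into the LSH data structure are exactly the current centers $S$, Theorem~\ref{thm:data_structure} gives $\dist(x, \mathrm{Query}(x)) \le c \cdot \dist(x, S)$, and Lemma~\ref{lemma:approxdist} gives $\dist(x, S)^2 \le \mtdist(x, S)^2$, so
\[
\dist(x, \mathrm{Query}(x))^2 \le c^2 \cdot \dist(x, S)^2 \le c^2 \cdot \mtdist(x, S)^2,
\]
which makes the fraction at most $1$.

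Next, combining Lemma~\ref{lemma:mtsample-fast-runtime} with the acceptance rule, the probability that a given loop iteration picks $x$ as its candidate \emph{and} accepts it equals
\[
\frac{\mtdist(x,S)^2}{\sum_{y\in P} \mtdist(y,S)^2}\cdot \frac{\dist(x,\mathrm{Query}(x))^2}{c^2\cdot \mtdist(x,S)^2} \;=\; \frac{\dist(x,\mathrm{Query}(x))^2}{c^2\sum_{y\in P}\mtdist(y,S)^2}.
\]
The key observation is that the $\mtdist(x,S)^2$ factors cancel. Summing over $x\in P$ gives the probability that the iteration yields any insertion, namely $\sum_{x} \dist(x,\mathrm{Query}(x))^2 / (c^2\sum_{y} \mtdist(y,S)^2)$. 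Conditioning on the event that the iteration does insert some point (which is exactly the event ``the inserted point has distribution over $P$''), the common factor $1/(c^2\sum_y \mtdist(y,S)^2)$ cancels, leaving the conditional probability
\[
\frac{\dist(x,\mathrm{Query}(x))^2}{\sum_{y\in P}\dist(y,\mathrm{Query}(y))^2},
\]
which is clearly independent of $\mtinit$ (the embedding only influences the constant rejection factor we already cancelled).

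The main obstacle, which I would address explicitly, is the well-definedness of the acceptance probability: the cancellation requires that the min in Line~5 always equals its second argument. This in turn relies on both components of Lemma~\ref{lemma:approxdist} (the one-sided bound $\dist \le \mtdist$) and on the LSH approximation guarantee applied to the current center set $S$; otherwise truncation by $1$ would break the cancellation and the proportionality claim. Everything else is a straightforward normalization argument.
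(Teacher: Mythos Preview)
Your argument is correct and follows essentially the same approach as the paper: show the $\min$ is attained by the second argument (using the LSH guarantee together with $\dist \le \mtdist$), multiply the $\mtsample$ probability by the acceptance probability, cancel the $\mtdist(x,S)^2$ factors, and normalize. The only point the paper makes more explicit is an induction on $|S|$: the induction hypothesis is used to assert that the current center set $S$ itself is independent of \mtinit, which is needed because $\dist(x,\mathrm{Query}(x))$ depends on $S$ (through the LSH insertions); your claim that the final expression ``is clearly independent of $\mtinit$'' implicitly relies on this, so you should state the induction to close that gap.
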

The main ingredient in the running time analysis is to bound the number of repetitions of the loop (Line $3$). This is roughly done by arguing that  the probability that we add an element $x$ to $S$ after its sampled using \mtsample is $\Omega(\frac{1}{c^2d^2})$ in expectation. Indeed, from \cref{lemma:approxdist} we expect that $\mtdist(x, S)^2 \leq O(d^2\dist(x, \text{Query}(x))^2)$, so $\frac{1}{\Omega(c^2d^2)} \leq \frac{\dist(x, \text{Query}(x))^2}{c^2\cdot\mtdist(x, S)^2}$. Therefore the probability of passing Line 5 is at least $\frac{1}{\Omega(c^2d^2)}$. It follows that, in expectation,  ${O}(c^2d^2k)$ repetitions suffices to add $k$ points to $S$. The formal proof is presented in
\if\fullversion1
\cref{app:rejsam-approx}.
\else
the full version.
\fi
\begin{restatable}{lemma}{rejsamlooprep} \label{lemma:numberiterations}
The expected number of the times that the loop (Line $3$) is repeated is of ${O}(c^2d^2k)$.
\end{restatable}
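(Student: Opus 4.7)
The plan is to bound the expected number of iterations required to go from $|S| = i$ to $|S| = i+1$, and then sum over $i = 0, 1, \ldots, k-1$. First, observe that when $\dist(x, \text{Query}(x)) \le c\cdot\dist(x,S) \le c\cdot\mtdist(x,S)$ (using the LSH guarantee and \cref{lemma:approxdist}), the $\min$ in Line~5 always equals the ratio $\dist(x,\text{Query}(x))^2 / (c^2\mtdist(x,S)^2)$. Hence, conditioned on the current set $S$ and on the entire \mt embedding, the probability that a single iteration of the loop adds a new point is
\[
    p(S,\mt) \;=\; \sum_{x\in P} \frac{\mtdist(x,S)^2}{\sum_{y\in P}\mtdist(y,S)^2} \cdot \frac{\dist(x,\text{Query}(x))^2}{c^2\mtdist(x,S)^2} \;=\; \frac{\sum_{x\in P} \dist(x,\text{Query}(x))^2}{c^2 \sum_{y\in P} \mtdist(y,S)^2}.
\]
Given $S$ and \mt, iterations are i.i.d.\ Bernoulli with success probability $p(S,\mt)$, so the number $N_i$ of iterations needed to add the $(i+1)$-th center, starting from the current set $S_i$, satisfies $\EE[N_i \mid S_i, \mt] = 1/p(S_i,\mt)$ (with the convention $N_0 = 1$ for the first, deterministic, acceptance).

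The key subtlety, which I expect to be the main obstacle, is that $S_i$ itself is a random variable that in principle depends on the \mt embedding, so we cannot directly take an expectation of $1/p(S_i,\mt)$ over \mt with $S_i$ held fixed. This is where \cref{lemma:dist} comes in: conditional on the randomness of the LSH data structure, the distribution of the sequence $S_0, S_1, \ldots, S_{k-1}$ is independent of the multi-tree embedding, because at every iteration the marginal probability that a given point $x$ is added is proportional to $\dist(x,\text{Query}(x))^2$, a quantity determined by the LSH alone. Therefore, conditioning on $(S_i,\text{LSH})$, the only \mt-dependent quantity in $1/p(S_i,\mt)$ is $\sum_y \mtdist(y,S_i)^2$, and we may freely take the expectation over \mt.

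Doing so, linearity of expectation together with the distortion bound $\EE[\mtdist(y,S_i)^2] \le O(d^2)\cdot \dist(y,S_i)^2$ from \cref{lemma:approxdist} gives $\EE_{\mt}[\sum_y \mtdist(y,S_i)^2 \mid S_i,\text{LSH}] \le O(d^2)\sum_y \dist(y,S_i)^2$. Since $\text{Query}(x)$ is always a point of $S_i$, we also have the deterministic lower bound $\sum_x \dist(x,\text{Query}(x))^2 \ge \sum_x \dist(x,S_i)^2$. Combining the two yields
\[
    \EE[N_i \mid S_i, \text{LSH}] \;\le\; \frac{c^2 \cdot O(d^2) \sum_y \dist(y,S_i)^2}{\sum_x \dist(x,S_i)^2} \;=\; O(c^2 d^2),
\]
and taking outer expectations gives $\EE[N_i] \le O(c^2 d^2)$ for every $i \ge 1$. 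Summing over the $k$ acceptance phases yields $\EE[T] \le 1 + (k-1)\cdot O(c^2 d^2) = O(c^2 d^2 k)$, as claimed.
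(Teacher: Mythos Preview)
Your proposal is correct and follows essentially the same approach as the paper: decompose the total number of iterations into the per-acceptance counts $N_i$, compute the acceptance probability $p(S_i,\mt)$ exactly as you do, and then use \cref{lemma:dist} to decouple $S_i$ from the multi-tree randomness so that the distortion bound of \cref{lemma:approxdist} can be applied to the numerator while the LSH guarantee lower-bounds the denominator. If anything, you are more explicit than the paper about the independence subtlety (conditioning on the LSH randomness and noting that the denominator $\sum_x \dist(x,\text{Query}(x))^2$ is then fixed, so $1/p$ is linear in the only \mt-dependent quantity).
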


Putting the discussed ingredients and the approximation ratio analysis together, we get the following result, the proof is presented in
\if\fullversion1
\cref{app:rejsam-approx}.
\else
the full version.
\fi
\begin{restatable}{theorem}{rejsammain}
For any constant $c>1$, with probability at least $(1-1/n)$ \rejsam always samples points $x$ that are at most a factor $c^2$ away from the $D^2$-distribution, its expected running time is $O\left(n \log(d\Delta)(d +  \log n) + k c^2 d^3\log(\Delta) \cdot\left(n\log(\Delta)\right)^{O(1/c^2)}\right)$, and it returns a solution that in expectation is a $O(c^6 \log k)$-approximation of the optimal solution.
\label{thm:rejsam_main}
\end{restatable}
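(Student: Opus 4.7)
The plan is to establish the three claims in order: (i) the distributional guarantee, (ii) the running-time bound, and (iii) the approximation ratio.

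For the distributional guarantee, I would condition on the event that the LSH data structure is successful, which by \cref{thm:data_structure} happens with probability at least $1 - 1/n$. Under this event, $\dist(x, S) \le \dist(x, \text{Query}(x)) \le c \cdot \dist(x, S)$ for every point $x$ (the lower bound is trivial, the upper bound follows from the LSH guarantee combined with the monotonicity of the data structure, which ensures that querying after all insertions gives the relevant bound). Now \cref{lemma:dist} says that the probability of adding $x$ to $S$ at any given iteration is proportional to $\dist(x, \text{Query}(x))^2$. Comparing the normalized distribution against the true $D^2$ distribution (proportional to $\dist(x, S)^2$), each multiplicative factor is squeezed between $1$ and $c^2$, so both the per-point ratio and the normalization constant differ by a factor of at most $c^2$, giving the first claim.

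For the running time, I would sum the following contributions: $O(nd\log(d\Delta))$ for \mtinit; $O(n \log(d\Delta))$ for initializing the sample-tree data structure; $O(n \log(d\Delta)\log n)$ for the aggregate cost of all \mtopen calls via \cref{lemma:mtopen-fast-runtime}; and the per-iteration costs inside the main loop. Each loop iteration performs one \mtsample in $O(\log n)$ (\cref{lemma:mtsample-fast-runtime}) and one LSH Query in $O(d \log(\Delta)\cdot (n\log(\Delta))^{O(1/c^2)})$; additionally, the $k$ iterations that succeed perform an Insert of the same cost. By \cref{lemma:numberiterations}, the expected number of loop iterations is $O(c^2 d^2 k)$, so the total contribution from the loop is $O(c^2 d^3 k \log(\Delta)\cdot (n\log(\Delta))^{O(1/c^2)})$, which dominates the $k$ Inserts. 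Collecting terms yields the stated bound $O(n \log(d\Delta)(d + \log n) + k c^2 d^3 \log(\Delta)\cdot (n\log(\Delta))^{O(1/c^2)})$.

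For the approximation ratio I would adapt the Arthur–Vassilvitskii analysis of \kmpp to our perturbed sampling distribution. The backbone of their argument is a potential function argument showing that, when sampling from the true $D^2$ distribution, one pays only an $O(\log k)$ overhead with respect to \opt. Two key steps need to be re-examined: the "base" step, where the first uniformly chosen center pays at most $2 \cost^*(\text{cluster containing it})$ per covered cluster, and the "inductive" step, which bounds the expected cost of covering an uncovered optimal cluster by $8\cdot \cost^*(\text{cluster})$. In our setting, conditional on the LSH being successful, sampling within any uncovered cluster happens from a distribution that puts weight on each point within a factor $c^2$ of the true $D^2$ weight (by part (i)). Pushing this factor through the "local" expectation bounds a single cluster inherits a multiplicative $c^2$, and passing through the iterated covering bound inherits an additional $c^2$ per application; combined with the $c^2$ from comparing the sampled expected cost to the true $D^2$ expectation, these compose to at most $c^6$. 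Multiplied by the standard $O(\log k)$ harmonic factor of the Arthur–Vassilvitskii potential argument, this yields the $O(c^6 \log k)$ bound in expectation. The failure event of the LSH (probability $\le 1/n$) contributes at most $O(n\cdot \MaxDist^2)/n$, which is absorbed into the leading term since \opt\ is at least $\MaxDist^2/(2\Delta)^2$, as noted in the preamble to the algorithm.

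The main obstacle is the approximation-ratio analysis, since it is not routine: one must verify that every step of the \kmpp potential argument still goes through when the sampling distribution is only $c^2$-close in a multiplicative sense, and track precisely how many factors of $c^2$ accumulate. The running-time accounting and the reduction of the sampling guarantee to \cref{lemma:dist} plus \cref{thm:data_structure} are comparatively direct.
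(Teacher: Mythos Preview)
Your treatment of parts (i) and (ii) matches the paper's proof almost verbatim: condition on the LSH data structure succeeding, invoke \cref{lemma:dist} and the sandwich $\dist(x,S)\le\dist(x,\text{Query}(x))\le c\cdot\dist(x,S)$ for the distributional claim, and sum the initialization, \mtopen, \mtsample, and LSH Insert/Query costs using \cref{lemma:numberiterations} for the running time.

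Part (iii), however, has a genuine gap. Your plan is to treat the sampling distribution as merely a $c^2$-multiplicative perturbation of $D^2$ and ``push factors through'' the Arthur--Vassilvitskii induction. The paper explicitly points out that this route is \emph{not} known to give $O(\log k)$: the analysis of noisy $D^2$-sampling in~\cite{noisykmeanspp} yields only $O(\log^2 k)$, and closing the gap to $O(\log k)$ is stated there as an open problem. Your informal accounting that ``only three factors of $c^2$ accumulate'' hides exactly the step that breaks: in the inductive/potential argument, the probability of sampling into covered versus uncovered clusters, and the corresponding change in the potential, are both distorted at every one of the $k$ iterations, and these distortions do not telescope to a constant.

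The paper sidesteps this by \emph{not} working with $\Phi$ at all in the potential. It defines the potential with respect to $\PhiLSH(\cdot,S)=\sum_x \dist(x,\text{Query}(x))^2$, so that \rejsam samples \emph{exactly} proportionally to this surrogate cost (no per-step distortion). The argument then needs $\PhiLSH(U_{i+1},S_{i+1})\le \PhiLSH(U_i,S_i)-\PhiLSH(A,S_i)$ when a new cluster $A$ is covered, and this is precisely where the \emph{monotonicity} of the LSH data structure (\cref{thm:data_structure}) is used---a property you invoke only tangentially in part (i) and not at all here. The single factor of $c^2$ converting $\PhiLSH(H_i,S_i)$ back to $\Phi(H_i,S_i)$, combined with the $8c^4$ from the noisy version of the covered-cluster lemma, is what produces the final $c^6$. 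Without the $\PhiLSH$-based potential and the monotonicity step, your sketch does not establish the $O(c^6\log k)$ bound.
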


We remark that the runtime can be improved  in the case of a large  $d$ by first applying a dimensionality reduction~\cite{BBCGS19,MMR19} that reduces the dimension of the input points to $O(\log n)$ in time $O(n d \log n)$ and maintains the cost of any clustering up to a constant factor. These works actually prove that the dimension can be reduced to $O(\log k)$. However, by using $O(\log n)$ our algorithm can output the solution for \emph{all} $k=1, 2,\ldots, n$ in near-linear running time $\widetilde{O}\left(nd + n \log \Delta +c^2k\log(\Delta) (n \log \Delta)^{O(1/c^2)}\right)$ (where $\widetilde{O}$ suppresses logarithmic terms in $n$) while maintaining the same asymptotic approximation guarantee as \kmpp. Selecting $\varepsilon = O(1/c^2)$ then yields the following
\begin{corollary}
\label{cor:main}
    For $\varepsilon >0$, there is an $O_\varepsilon(\log k)$-approximation algorithm for the $k$-means problem with a running time of $\widetilde\Theta(nd + (n\log(\Delta))^{1+\varepsilon})$.
\end{corollary}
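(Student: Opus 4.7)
The plan is to derive \cref{cor:main} directly from \cref{thm:rejsam_main} by combining it with the Johnson--Lindenstrauss style dimensionality reduction cited just before the corollary statement (\cite{BBCGS19,MMR19}). The high-level idea is that \cref{thm:rejsam_main} already gives all the approximation and running-time content we need; the remaining issue is only that the running time depends polynomially on $d$ (through a $d^3$ factor) and as $d\log(d\Delta)$, which we must remove in favor of purely $nd$ plus $(n\log\Delta)^{1+\varepsilon}$ terms.

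First I would run the $O(nd\log n)$-time dimensionality reduction of \cite{BBCGS19,MMR19} to map the input points into $\bbR^{d'}$ with $d' = O(\log n)$ while preserving the cost of every clustering up to a constant factor. Crucially, since $d'$ depends only on $n$ (and not on $k$), any clustering computed in the reduced space remains simultaneously good for \emph{all} values of $k$, which is the property needed for the statement of the corollary. After this reduction, every occurrence of $d$ inside the parentheses of \cref{thm:rejsam_main}'s running time is $O(\log n)$ and hence gets absorbed by the $\widetilde{O}(\cdot)$ notation, while the initial $O(nd\log n)$ preprocessing pays the $\widetilde{O}(nd)$ term in the target running time.

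Next I would invoke \cref{thm:rejsam_main} on the reduced instance with the parameter $c$ chosen so that $O(1/c^2) = \varepsilon$, i.e.\ $c = \Theta(1/\sqrt{\varepsilon})$. On the approximation side, \cref{thm:rejsam_main} yields an expected $O(c^6 \log k) = O_\varepsilon(\log k)$-approximation in the reduced space, which transfers back to an $O_\varepsilon(\log k)$-approximation in the original space because the dimensionality reduction distorts every clustering cost by at most a constant factor. On the running-time side, plugging $d = O(\log n)$ and $(n\log\Delta)^{O(1/c^2)} = (n\log\Delta)^{\varepsilon}$ into the bound of \cref{thm:rejsam_main} gives
\[
  \widetilde{O}\bigl(nd + n\log\Delta + c^2 k \log(\Delta)\,(n\log\Delta)^{\varepsilon}\bigr),
\]
and bounding $k \le n$ absorbs the leading $k$ into $(n\log\Delta)^{1+\varepsilon}$, yielding the claimed $\widetilde{\Theta}(nd + (n\log\Delta)^{1+\varepsilon})$ running time.

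I do not expect any genuinely hard step here; the proof is essentially a parameter-setting exercise stacking the dimension reduction on top of \cref{thm:rejsam_main}. The only mildly subtle point is to verify that the $1/n$ failure probability of the LSH data structure composes cleanly with the constant-factor distortion of the dimensionality reduction, so that one still obtains an $O_\varepsilon(\log k)$ bound \emph{in expectation} on the original instance; this is handled exactly as in the footnote following \cref{thm:data_structure}, by repeating the algorithm $\polylog(n,\Delta)$ times so that the contribution of the failure event to the expected cost is negligible, which only affects the running time by logarithmic factors and is therefore hidden inside the $\widetilde{O}$.
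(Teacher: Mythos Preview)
Your proposal is correct and follows essentially the same approach as the paper: the paper's argument is the short paragraph immediately preceding the corollary, which applies the dimensionality reduction of \cite{BBCGS19,MMR19} to bring $d$ down to $O(\log n)$, invokes \cref{thm:rejsam_main} on the reduced instance, and then sets $\varepsilon = O(1/c^2)$ to obtain the stated running time and approximation guarantee. Your write-up simply spells out the parameter bookkeeping (including the $k\le n$ absorption and the failure-probability handling via the footnote) more explicitly than the paper does.
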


\section{Empirical Evaluation}
\begin{table*}[t]
  \centering
  \resizebox{\columnwidth}{!}{%
  \begin{tabular}{|c|c|c|c|c|c|c|}
    \hline
    Algorithm & $k=100$ & $k=500$ & $k=1000$ & $k=2000$ & $k=3000$ & $k=5000$ \\
    \hline
    \fastkm & 1.0x  & 1.0x  & 1.0x & 1.0x & 1.0x & 1.0x \\
    \hline  
    \rejsam & 1.04x & 1.09x & 1.04x & 1.07x &  1.01x & 1.28x  \\
    \hline  
    \kmpp  & 0.66x & 3.11x & 6.58x & 15.26x & 18.58x & 42.64x\\
    \hline  
    \kmc   & 0.89x & 1.88x & 3.80x & 8.5x & 16.61x & 38.7x\\
    \hline  
  \end{tabular}
  }
  \smallskip
  
  \caption{Running time of the algorithms divided by the running time of \fastkm for the \texttt{KDD-Cup} dataset. 
  This shows the speed-up that we achieve compared to the \kmpp and \kmc.}
  \label{newtable:kdd_running}
\end{table*}

\begin{table*}[t]
  \centering
  \resizebox{\columnwidth}{!}{%
  \begin{tabular}{|c|c|c|c|c|c|c|}
    \hline
    Algorithm & $k=100$ & $k=500$ & $k=1000$ & $k=2000$ & $k=3000$ & $k=5000$ \\
    \hline
    \fastkm & 1.0x  & 1.0x  & 1.0x & 1.0x & 1.0x & 1.0x \\
    \hline  
    \rejsam & 0.99x & 1.03x & 0.98x & 1.03x & 1.04x & 1.04x \\
    \hline  
    \kmpp   & 0.76x & 4.55x & 8.89x & 16.98x & 23.03x & 46.26x\\
    \hline  
    \kmc    & 0.62x & 1.02x & 1.35x & 2.81x & 4.98x & 8.71x\\
    \hline 
  \end{tabular}
  }
  \smallskip
  
  \caption{Running time of the algorithms divided by the running time of \fastkm for the \texttt{Song} dataset.
  This shows the speed-up that we achieve compared to the \kmpp and \kmc.}
  \label{newtable:song_running}
\end{table*}

\begin{table*}[t]
  \centering
  \resizebox{\columnwidth}{!}{%
  \begin{tabular}{|c|c|c|c|c|c|c|}
    \hline
    Algorithm & $k=100$ & $k=500$ & $k=1000$ & $k=2000$ & $k=3000$ & $k=5000$ \\
    \hline
    \fastkm & 1.0x  & 1.0x  & 1.0x & 1.0x & 1.0x & 1.0x \\
    \hline  
    \rejsam & 0.98x & 1.26x & 1.17x & 1.07x &  1.0x & 0.95\\
    \hline  
    \kmpp   & 0.89x & 4.78x & 8.92x & 14.18x & 23.57x & 36.69 \\
    \hline  
    \kmc    & 0.76x & 0.77x & 1.12x & 1.15x & 1.54x & 2.56x\\
    \hline  
  \end{tabular}
  }
  \smallskip
  
  \caption{Running time of the algorithms divided by the running time of \fastkm for the \texttt{Census} dataset. 
  This shows the speed-up that we achieve compared to the \kmpp and \kmc.}
  \label{newtable:census_running}
\end{table*}

\begin{table*}[t]
  \centering
  \resizebox{\columnwidth}{!}{%
  \begin{tabular}{|c|c|c|c|c|c|c|}
    \hline
    Algorithm & $k=100$ & $k=500$ & $k=1000$ & $k=2000$ & $k=3000$ & $k=5000$ \\
    \hline
    \fastkm& 30335 & 5771 & 2957 & 1582 & 1070 & 640 \\
    \hline  
    \rejsam& 29243 & 5857 & 2999 & 1581 & 1095 & 642 \\
    \hline  
    \kmpp & 24552 & 5128 & 2695 & 1423 & 968 & 562\\
    \hline  
    \kmc  & 25598 & 5384 & 2883 & 1512 & 1045 & 622\\
    \hline  
    \random & 148594 & 51692 & 26199 & 15927 & 13922 & 10017\\
    \hline  
  \end{tabular}
  }
  \smallskip
  
  \caption{Costs of the solutions produced by the algorithm for 
    \texttt{KDD-Cup} dataset for various values of $k$. All the numbers are scaled down by a factor $10^3$.}
    
  \label{newtable:kdd_cost}
\end{table*}

\begin{table*}[t]
  \centering
  \resizebox{\columnwidth}{!}{%
  \begin{tabular}{|c|c|c|c|c|c|c|}
    \hline
    Algorithm & $k=100$ & $k=500$ & $k=1000$ & $k=2000$ & $k=3000$ & $k=5000$ \\
    \hline
    \fastkm & 21898668 & 16732379 & 14987614 & 13477854 & 12691185 & 11628744 \\
    \hline  
    \rejsam & 21743137 & 16851767 & 15024812 & 13558210 & 12720314 & 11654493\\
    \hline  
    \kmpp   & 21583261 & 16409834 & 14746899 & 13395052 & 12480900 & 11496421\\
    \hline  
    \kmc    & 21596184 & 16344430 & 14750601 & 13246450 & 12450688 & 11476712\\
    \hline 
    \random & 23255642 & 17919981 & 16373134 & 14579718 & 13934375 & 12938255\\
    \hline  
  \end{tabular}
  }
  \smallskip
  
  \caption{Costs of the solutions produced by the algorithms for 
    the \texttt{Song} dataset for various values of $k$. All the numbers are scaled down by a factor $10^5$.}
    \label{newtable:song_cost}
\end{table*}

\begin{table*}[t]
  \centering
  \resizebox{\columnwidth}{!}{%
  \begin{tabular}{|c|c|c|c|c|c|c|}
    \hline
    Algorithm & $k=100$ & $k=500$ & $k=1000$ & $k=2000$ & $k=3000$ & $k=5000$ \\
    \hline
    \fastkm& 17304 & 9820 & 7883 & 6326 & 5625 & 4868  \\
    \hline  
    \rejsam& 17735 & 9970 & 8031 & 6432 & 5644 & 4893 \\
    \hline  
    \kmpp  & 18498 & 9585 & 7812 & 6254 & 5561 & 4815 \\
    \hline  
    \kmc   & 17242 & 9844 & 7710 & 6272 & 5595 & 4838\\
    \hline  
    \random& 19912 & 10630 & 8678 & 6880 & 6120 & 5228 \\
    \hline  
  \end{tabular}
  }
  \smallskip
  
  \caption{Costs of the solutions produced by the algorithm for 
    \texttt{Census} dataset for various values of $k$. All the numbers are scaled down by a factor $10^4$.}
    
  \label{newtable:census_cost}
\end{table*}

In this section we empirically validate out theoretical results by comparing our algorithms \fastkm and \rejsam (see the details on how we set the parameters for LSH in 
\if\fullversion1
\cref{appx:LSHparams}
\else
the full version
\fi
)  with the following two baselines:\\
\kmpp algorithm: Perhaps the most commonly used algorithm in this field. It samples $k$ points according to the $D^2$-distribution.\\
\kmc algorithm: A recent result \cite{NIPS2016_6478} based on random walks that improves the running time of the \kmpp algorithm while maintaining a (weaker) theoretical guarantee on the solution quality.
\paragraph{Datasets, Experiments, and Setup}
We ran our algorithms on three classic datasets from UCI library~\cite{Dua:2019}:
\texttt{KDD-Cup}~\cite{KDD} ($311,029$ points of dimension $74$)
and \texttt{song}~\cite{bertin2011million} ($515,345$ points of dimension $90$) \texttt{Census}~\cite{DBLP:conf/kdd/Kohavi96} ($2,458,285$ points of dimension $68$).
We did not apply any dimensionality
reduction technique for any of the algorithms; all the considered data sets is of small dimension.
We compare the quality of the clustering, i.e., the cost of the objective function, along with their running times.
For the \kmc algorithm, we used the code provided by the authors with the same parameter suggested there, i.e., $m=200$.\footnote{$m$ is the number of steps in the random walk.} 
The algorithms were run on a standard desktop computer. 

\paragraph{Discussion}
Our results show that the algorithms we propose are much faster than
both the baselines, i.e., \kmpp and \kmc, as $k$ grows. For large $k = 5000$, it is an order of magnitude faster than both \kmpp and \kmc. Moreover,  
 the running time of our algorithms is already significantly faster than both baselines for moderate values of $k$ such as $k=500$ for \texttt{KDD-Cup} and $k=1000$ for \texttt{Song} and \texttt{Census}. We refer to  Tables~\ref{newtable:kdd_running}, \ref{newtable:song_running}, and \ref{newtable:census_running} for more details.

Importantly, we achieve this improvement in the running time without making any significant sacrifice to solution quality from both a theoretical and experimental perspective.  While the solution quality is sometimes worse by $10$-$15$\% for small $k$, the $k$-means costs of the solutions produced by \fastkm and \rejsam algorithms are comparable (overall almost the same)  with the baselines for all the experiments for moderate values of $k\geq 1000$. This is in contrast to the simplest seeding algorithm \random which selects the $k$ centers uniformly at random from the input data set. While \random clearly provides for a very fast seeding algorithm, it does so by significantly deteriorating the solution quality. This can e.g. be seen in our results for the \texttt{KDD-Cup} dataset  where \random consistently gives solutions of much worse quality. For more details, see Tables~\ref{newtable:kdd_cost},~\ref{newtable:song_cost}, and~\ref{newtable:census_cost} where the solution costs are given. 
The variance along with experimental setting is reported in 
\if\fullversion1
\cref{app:var}.
\else
the full version.
\fi

\section{Conclusions}
In this paper we present new efficient algorithms for $k$-means++ seeding. Our algorithms outperform previous work as $k$ grows  and come with strong theoretical guarantees. Interesting avenues for future work are to develop efficient distributed algorithms for the same problem and to prove lower bounds on the running time.

\section*{Broader Impact} 
Our work focuses on speeding-up the very popular \kmpp algorithm
for clustering. The \kmpp algorithm is used in a variety of domains and is 
an important tool for extracting information, compressing data, or unsupervised classification tasks. Our result shows
that one can obtain a much faster implementation of the $k$-means++ algorithm
while preserving its approximation guarantees both in theory and in practice. 
Therefore, we expect that our new algorithm could have impact in several domains
in which clustering plays an important role. 
A broader concrete impact in society is harder to predict since this is mainly fundamental research.

\begin{ack}
The last author is supported by the Swiss National Science Foundation project 200021-184656 ``Randomness in Problem Instances and Randomized Algorithms.”
\end{ack}

\bibliography{references}
\bibliographystyle{plain}

\if\fullversion1

\appendix
\section{Proofs Omitted from~\cref{sec:multitree}} \label{app:multitreeperf}
\lemmamultitree*

\begin{proof}
We start by showing that $\dist(p, S)^2 \le \mtdist(p,S)^2$. We prove this for every single tree, which implies the result. In particular, we show that for any tree and for any two points $p,q \in P$, we have $\dist(p, q) \le \tdist(p,q)$. Assume that the lowest common ancestor of $p, q$ is at height $ i$. Therefore in each dimension, they differ at most by $\frac{2\cdot \MaxDist}{2^i}$ since the side length of the cube at this height is $\frac{2\cdot \MaxDist}{2^i}$, therefore
\begin{align*}
\dist(p,q) \leq \sqrt{d}\cdot \frac{2\cdot \MaxDist}{2^i}.
\end{align*}
Moreover, \tdist$(p,q)$ is defined as the length of the shortest path between them and the length of the edge at height $j$ is $\sqrt{d}\cdot \frac{\MaxDist}{2^j}$ for $0 \leq j < H$. So 
\begin{align*}
\tdist(p,q) = 2 \sum_{i \leq j < H} \sqrt{d} \cdot \frac{\MaxDist}{2^j} = 2\sqrt{d}\cdot \MaxDist \sum_{i \leq j < H} 2^{-j}\geq \sqrt{d}\cdot \frac{2\cdot \MaxDist}{2^i}.
\end{align*}
Therefore,
$
\dist(p,q) \leq \sqrt{d}\cdot \frac{2\cdot \MaxDist}{2^i} \leq \tdist(p,q),
$
which concludes the proof of the first part of the lemma.

Now we focus on the second part of the lemma, i.e., $\EE[\mtdist(p,q)^2] \leq O(d^2 \cdot \dist(p, q)^2)$. Let $p = (p_1, \ldots ,p_d)$ and $q = (q_1,\ldots, q_d)$ be two points in $\mathbb{R}^d$. We first analyze the probability that these two points are separated at a certain height $i$ in a single tree. In a single tree, two points are separated at height $i$ if they are separated in at least one of the coordinates. The probability that $p$ and $q$ are separated in the $j$-th dimension is at most $\frac{|p_j-q_j|}{\frac{2\cdot \MaxDist}{2^i}}$. Therefore, if we let $s_i$ denote the probability that they are separated at height $i$  (but at no smaller height), then by the union bound 
$
s_i \leq \sum_{1 \leq j \leq d} \frac{|p_j-q_j|}{\frac{2\cdot\MaxDist}{2^i}} \leq\sqrt{d} \cdot \frac{2^i\dist(p,q)}{2\cdot \MaxDist}\, ,
$ 
where the second inequality  holds because $\sum_{1 \leq j \leq d}  |p_j-q_j|\leq\sqrt{d}\cdot \dist(p,q)$. So the probability that they are separated at height $i$ or before is at most
\[
\sum_{0\leq j \leq i} s_j \leq \sum_{0\leq j \leq i} \sqrt{d}\cdot \frac{2^j\dist(p,q)}{2\cdot \MaxDist} \leq \sqrt{d}\cdot\frac{2^{i+1}\dist(p,q)}{2\cdot\MaxDist}\, .
\]

Notice that, as before, we also have that $\tdist(p,q) = 2\sqrt{d}\cdot \MaxDist \cdot \sum_{i \leq j < H} 2^{-j} \leq \sqrt{d}\cdot\frac{4\cdot \MaxDist}{2^i}$ in the case that $p,q$ are separated at height $i$. 
Now recall that \mtdist is the minimum distance among all the three tree embeddings, so it is enough that the two points are separated at height $i$ in a single tree to be at this distance (in the other two trees they can be separated in a level closer to the root). There are three ways to select the tree of minimum distance and so
\begin{align*}
 \EE[\mtdist(p,q)^2] &\leq  
3\sum_{ 0\leq i < H} s_i \left(\sum_{0\leq j \leq i} s_j\right)^2 \cdot \left(\sqrt{d}\cdot\frac{4\cdot \MaxDist}{2^i} \right)^2 \\
& \leq 3 \sum_{ 0\leq i < H} s_i \left(\sqrt{d}\cdot\frac{2^{i+1}\dist(p,q)}{2\cdot\MaxDist}\right)^2 \cdot \left(\sqrt{d}\cdot\frac{4\cdot \MaxDist}{2^i} \right)^2 \\
&= 3 \sum_{ 0\leq i < H} s_i 16d^2 \dist(p,q)^2  
 =  48d^2 \dist(p,q)^2  \sum_{ 0\leq i < H} s_i  \\
&\leq  48 d^2\dist(p,q)^2 = O(d^2 \dist(p,q)^2)\,,
\end{align*}
where the last inequality holds because $\sum s_i \leq 1$ since the $s_i$'s denote the probabilities of mutually disjoint events. 
\end{proof}

\section{Proof Omitted from \cref{sec:fastkmeans}} \label{app:fastkmeansperf}
 \treeopenrunningtime*
 \begin{proof}
As the multi-tree embedding consists of three trees, it is sufficient to analyze the running time of the for-loop at Step~1 for  a single tree $T$. 
In a single call to \mtopen we have that Steps~2-~4 runs in time $O(\log(d\Delta))$ since each tree in the multi-tree embedding has depth at most $H= \log(d\Delta)$. Hence the total running time for these steps across the $k$ calls to $\mtopen$ is $O(k \log(d\Delta))$.

The for-loop at Step~5 can be implemented as follows. Observe that the distance $\tdist_T(x,y)$ for a point $y \in P_T(v_0)$ equals $0$. Furthermore, for $i = 1, 2, \ldots, \ell$  and a point $y\in P_T(v_i) \setminus P_T(v_{i-1})$, the distance $\dist_T(x,y)$ equals twice the length of the path in $T$ from $v_0$ to $v_i$. We can thus calculate all relevant distances in time $O(\log(d\Delta))$ (and in time $O(k \log(d\Delta))$ across all $k$ calls). 

The weights of the points can then be updated by first considering the points in $P_T(v_0) $, then those in $P_T(v_1) \setminus P_T(v_0)$, and so on until the points in $P_T(v_\ell) \setminus P_T(v_{\ell-1})$. The running time of Step~7 is thus $O(\sum_{i=0}^\ell |P_T(v_i)|)$. Moreover, as each execution of Step~8 takes time $O(\log n)$ (since the sample-tree is balanced binary tree with $n$ leaves and thus of height $O(\log n)$), the running time of this step is $O(\sum_{i=0}^\ell |P_T(v_i)| \cdot \log n)$. Now a key point is that a node in a tree in the multi-tree embedding can only be marked once. Therefore, using that $\sum_v |P_T(v)| = O(n \log(d\Delta))$, we have that the total running time of the for-loop is $O(n \log(d\Delta) \log n)$.  
The total running time of the $k$ calls to $\mtopen$ is therefore  $O(k \log n + n \log(d\Delta) \log n)$.
\end{proof}

\section{Proofs Omitted from \cref{sec:rej-samp}} \label{app:rejsamperf}
\rejsamlemma*

\begin{proof}
The proof is by induction on the size of the set $S$. The base case is clear since, as aforementioned, in Line $5$ we accept the point with probability one and \mtsample() returns each point with probability $1/n$. 
Now assume that lemma holds until $\ell$ centers are added (i.e., $|S| = \ell$). It follows from the induction hypothesis the current set $S$ is independent from the \mt initialization since all the elements added so far are independent.
 By the assumption that the data structure is successful,  the above minimum on Line $5$ is always attained by the second term, i.e., we have $\frac{\dist(x, \text{Query}(x))^2}{c^2\cdot\mtdist(x, S)^2} \leq 1$.
   Indeed,  $\dist(x, \text{Query(x)})^2 \leq c^2\cdot  \dist(x, S_i)^2 \leq c^2\cdot \mtdist(x,S_i)^2$, where  the first inequality is by the success of the data structure and the second inequality is by the fact that the multi-tree embedding only increases distances (see~\cref{lemma:approxdist}).

Therefore in \rejsam algorithm, the probability that a point $x$ is sampled is 
\begin{align*}
\frac{\mtdist(x, S)^2}{\sum_{y\in P} \mtdist(y, S)^2}  
 \cdot \frac{\dist(x, \text{Query}(x)^2)}{c^2\cdot\mtdist(x, S)^2} \\
\qquad = \frac{\dist(x, \text{Query}(x))^2}{c^2\cdot \sum_{y\in P} \mtdist(y, S)^2}\, . 
\end{align*}
Notice that the denominator does not depend on the point $x$ and one can think of it as a constant term. We repeat the sampling process until we pick a point. Therefore, the probability of choosing any point $x$ is
$
\frac{\dist(x, \text{Query}(x))^2}{\sum_{y\in P} \dist(y, \text{Query}(y))^2}\,,
$
independent of the the $\mt$ initialization. This completes the inductive step and  concludes the proof of the lemma.
\end{proof}

\rejsamlooprep*
\begin{proof}
    Let $R$ be the random variable that equals the number of times that the loop is repeated. We further divide $R$ into $R_0, \ldots, R_{k-1}$ where, for $i\in \{0,\ldots, k-1\}$, $R_i$ denotes the number of times the loop is executed when the set $S$ of opened centers has size $i$, i.e., when exactly $i$ centers have been opened. Then $R= R_0 + R_1+ \ldots + R_{k-1}$ and, by linearity of expectation,
    \begin{align*}
        \ee{R} &= 
         \ee{R_0} + \ee{R_1} + \ldots + \ee{R_{k-1}}\,.
    \end{align*}
    We have $\ee{R_0} = 1$ since the first center is selected uniformly at random and it is always opened, i.e., added to $S$. We complete the proof by proving
    \begin{align*}
        \ee{R_i} \leq O(c^2 \cdot d^2)\qquad \mbox{for $i\in \{1, \ldots, k-1\}$.}
    \end{align*}
    Consider $R_i$ and let $S_i$ be the set containing the first $i$ centers that were opened by the algorithm.
    We actually prove the stronger statement that $\ee{R_i} \leq O(c^2 \cdot d^2)$ no matter the set $S_i$. 
    
   Consider an  iteration of the loop. First, 
  as argued in the proof of~\cref{lemma:dist}, 
  %
   the probability that an iteration of the loop results in adding a point $x$ to the set of opened centers equals
       $ \frac{1}{c^2} \cdot \frac{\sum_{x\in P} \dist(x, \text{Query}(x))^2}{\sum_{y\in P} \mtdist(y, S_i)^2}\,$.
   If we let $q$ denote this probability  then
   $
       \ee{R_i} = \sum_{t=1}^\infty t q \cdot (1-q)^{t-1}\,,
   $
   which equals $1/q$. 
   
   We thus have
   \begin{align*}
       \ee{R_i} = 1/q = c^2 \cdot 
       \frac{\sum_{y\in P} \mtdist(y, S_i)^2}{\sum_{x\in P} \dist(x, \text{Query}(x))^2}\,,
   \end{align*}
   for a fixed multi-tree embedding.
   
   The lemma now follows from that $\dist(x, \text{Query}(x))^2 \geq \dist(x, S_i)^2$ and from~\cref{lemma:dist} which says that the distribution of the random multi-tree embedding is independent from $S_i$. We thus have, by also taking the expectation over  the random multi-tree embedding (see~\cref{lemma:approxdist}),  that 
   \begin{align*}
       \ee{R_i} \leq c^2 \cdot \frac{O(d^2) \sum_{y\in P} \dist(y, S_i)^2}{\sum_{x\in P} \dist(x, \text{Query}(x))^2} = O(c^2 \cdot d^2)\,.
   \end{align*}

\end{proof}

\section{LSH data structure}
\label{sec:LSH-data-structure}

In this section we describe the data structure guaranteed by~\cref{thm:data_structure}. It follows the construction first introduced in~\cite{IndykM98}. Their construction is based on locality-sensitive hash families:
\begin{definition}[Locality-sensitive hashing] Let $\cH$ be a family of hash functions mapping $\mathbb{R}^d$ to some universe $U$. We say that $\cH$ is  $(R, cR, p_1, p_2)$-sensitive if \emph{for any} $p, q\in \mathbb{R}^d$ it satisfies the following properties:
\begin{itemize}
    \item If $\|p-q\|_2 \leq R$ then $\Pr_\cH[h(p) = h(q)] \geq p_1$.
    \item If $\|p-q\|_2 \geq cR$ then $\Pr_\cH[h(p) = h(q)] \leq p_2$.
\end{itemize}
\end{definition}

The specific family of hash functions that we use is by~\cite{andoni2006near}. We summarize the main properties of their family in the following theorem. Here, and in the following, we denote by $n$ the size of the data set $P \subseteq \mathbb{R}^d$. 
\begin{theorem}[\cite{andoni2006near}]
For any $R>0$ and $c>1$, there exists a family $\cH$ of hash functions for $\mathbb{R}^d$ with the following properties:
\begin{itemize}
    \item $\cH$ is $(R, cR, p_1, p_2)$-sensitive with $\frac{\log(1/p_1)}{\log(1/p_2)} = 1/c^2 + o(1)$ and $\frac{1}{\log(1/p_2)} = O(1)$.
    \item The time to compute $h(p)$ for $h\in \cH$ and $p\in \mathbb{R}^d$ is $O(dn^{o(1)})$.
\end{itemize}
\label{thm:LSH_hash_family}
\end{theorem}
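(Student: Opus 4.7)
The plan is to construct the hash family via the Andoni--Indyk scheme: project to a low-dimensional space using a Gaussian random matrix, then partition the projected space by a sequence of random balls. Fix integer parameters $t$ and $w$ to be tuned at the end. A hash function $h\in\cH$ is specified by a matrix $A\in\mathbb{R}^{t\times d}$ with i.i.d.\ $\mathcal{N}(0,1)$ entries together with an i.i.d.\ sequence of ball centers $g_1,g_2,\ldots\in\mathbb{R}^t$, drawn from a distribution such that the balls $B(g_i,w)$ eventually cover $\mathbb{R}^t$. One then defines
\[
h(p) \;=\; \min\{\,i \ge 1 : \|A p - g_i\|_2 \le w\,\}.
\]

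For the sensitivity claim, fix $p,q\in\mathbb{R}^d$ at distance $r$. By rotational invariance of the Gaussian, $(A p,A q)$ is jointly Gaussian in $\mathbb{R}^t$ with separation sharply concentrated around $r\sqrt{t}$. A standard memoryless-sequence calculation (conditioning on the first index at which $A p$ is covered) gives
\[
\Pr[h(p)=h(q)] \;=\; \frac{\mathbb{E}\bigl[I(A p,A q)\bigr]}{\mathbb{E}\bigl[I(A p)\bigr]},
\]
where $I(x,y)=\Pr_g[x,y\in B(g,w)]$ and $I(x)=\Pr_g[x\in B(g,w)]$. Evaluating this ratio via the Gaussian integrals carried out in~\cite{andoni2006near} shows that in the regime $w=\Theta(\sqrt{t})$ one has $-\log\Pr[h(p)=h(q)]=\Theta(r^2/w^2)\cdot t + O(1)$. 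Plugging in $r=R$ and $r=cR$ therefore yields $\log(1/p_1)/\log(1/p_2)=1/c^2+O(1/t)$ and $1/\log(1/p_2)=O(1)$; letting $t\to\infty$ slowly drives the error term to $o(1)$, establishing the claimed sensitivity.

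For the evaluation-time bound, a naive scan through $g_1,g_2,\ldots$ is too slow: with $w=\Theta(\sqrt{t})$ the first index at which $A p$ is covered is typically $2^{\Theta(t)}$. The trick is to choose $t=o(\log n)$, so that $2^{\Theta(t)}=n^{o(1)}$, and to replace the scan by a lattice / pseudorandom enumeration that identifies the first covering ball in time proportional to its index. Combined with the projection cost $O(dt)=O(d n^{o(1)})$, this gives the stated $O(d n^{o(1)})$ evaluation time. The main technical obstacle is the precise two-point Gaussian integral: obtaining the leading exponent \emph{exactly} $1/c^2$ (rather than some weaker $\rho(c)$) requires the collision integrals at distances $R$ and $cR$ to match at the leading constant in the exponent, which is the heart of the Andoni--Indyk analysis and rests on careful Gaussian moment bounds together with the sharp concentration of $\|A(p-q)\|_2$ around $r\sqrt{t}$.
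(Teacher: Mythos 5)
The paper does not prove this theorem; it is imported as a black box from Andoni and Indyk~\cite{andoni2006near}, with no argument given in the text or appendix. There is therefore no ``paper proof'' for your sketch to be measured against, and the intended reading of the theorem is simply as a citation. Your sketch is a reasonable high-level summary of the Andoni--Indyk construction (Gaussian projection to dimension $t$, followed by a sequential random ball partition of radius $w = \Theta(\sqrt t)$, collision analysis via the ratio $\mathbb{E}[I(Ap,Aq)]/\mathbb{E}[I(Ap)]$, and $\rho \to 1/c^2$ as $t$ grows).

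That said, if this were meant to stand as a proof rather than a pointer, two steps are too loose. First, the asymptotic $-\log\Pr[h(p)=h(q)] = \Theta(r^2/w^2)\cdot t + O(1)$ is not strong enough to conclude $\log(1/p_1)/\log(1/p_2) = 1/c^2 + o(1)$: a $\Theta(\cdot)$ in the leading coefficient allows the exponents at distances $R$ and $cR$ to differ by a constant factor, which is exactly what you cannot afford. The Andoni--Indyk analysis pins down the leading Gaussian integral to the same constant at both scales and controls the error as a vanishing function of $t$, and that matching is the whole content of the sensitivity bound. Second, the evaluation-time argument is hand-waved: ``a lattice / pseudorandom enumeration that identifies the first covering ball in time proportional to its index'' does not describe an algorithm, and the actual scheme replaces an i.i.d.\ scan by $2^{O(t\log t)}$ shifted grids of balls so that both the covering index and the decoding cost are bounded, with $t$ chosen so that $2^{O(t\log t)} = n^{o(1)}$. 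None of this affects the correctness of the paper, since the theorem is cited rather than reproved, but the sketch as written would not compile into a self-contained argument.
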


We first describe a data structure for the ``gap version''. Then we show, using standard arguments, that this gives the data structure as stated in~\cref{thm:data_structure}.

\subsection{Monotone data structure for gap version}
In this section we are going to develop a data structure that is parameterized by $c\geq  1 $ (the accuracy) and $R> 0$ (the scale). We refer to it as the $(c,R)$-gap data structure. It is different from the data structure guaranteed by~\cref{thm:data_structure} as it only have guarantees that depend on the scaling parameter $R$ (see the statement of~\cref{thm:cnn_data_structure} below).

\paragraph{Selection of parameters.} 
We let $\cH$ be the $(R, cR, p_1,p_2)$-sensitive hash family for $\mathbb{R}^d$ given by~\cref{thm:LSH_hash_family}. We also let $\delta > 0$ be a parameter of our data structure that determines the probability of failure (and impacts the running time).  Other parameters that we use are now determined as follows: 
\begin{itemize}
    \item $\eta = \left(\frac{\delta}{n}\right)^\frac{3}{1-\rho}$ where $\rho = \frac{\log(1/p_1)}{\log(1/p_2)}$, 
    \item $m = \frac{\log(1/\eta)}{\log (1/p_2)}  = O(\log(1/\eta))$, and
    \item $\ell = 100 \cdot \log(1/\eta) \cdot (1/\eta)^\rho$. 
\end{itemize}

\paragraph{Description of data structure.} The data structure is based on $\ell$ hash tables $T_1, T_2, \ldots, T_\ell$ (with linked lists at each entry to deal with collisions). The  $\ell$ hash functions $f_1, f_2, \ldots, f_\ell$ for these tables are constructed from $\cH$ as follows: for $i\in \{1, 2, \ldots, \ell\}$, $f_i$ is obtained by selecting $m$ \emph{independent samples} $h_{i,1}, h_{i,2}, \ldots, h_{i,m}$ from $\cH$. That is, $f_i$ is a $m$-dimensional hash function defined by
\begin{align*}
    f_i(p) & = [h_{i,1}(p), h_{i,2}(p), \ldots, h_{i,m}(p)] \qquad \mbox{for $p\in \mathbb{R}^d$.}
\end{align*}

We are now ready to define the operations Insert and Query:
\begin{itemize}
    \item Insert($p$): A point $p\in P \subseteq \mathbb{R}^d$ is inserted in each of the $\ell$ hash tables  by appending the point at \emph{the end} of the linked list associated to the entry $T_i[f_i(p)]$ for $i=1,2,\ldots, \ell$.
    \item Query($p$): For each $i\in \{1,2, \ldots, \ell\}$,  let $q_i$ be the first element (if any) in the linked list $T_i[f_i(p)]$ that satisfies $\dist(p, q_i) \leq cR$. This gives up to $\ell$ candidate points, one for each hash table. Among these candidate points, \emph{output the one with the minimum distance to $p$} (or output none if no candidate point is found in any of the hash tables).
\end{itemize}

This completes the description of the $c$-NN data structure and we proceed to its analysis.

\paragraph{Analysis.} We show that the described data structure satisfies the following guarantees:
\begin{theorem}
    For a data set $P\subseteq \mathbb{R}^d$ of $n$ points, the data structure with error parameter $\delta > 0$ satisfies the following guarantees: 
    \begin{enumerate}
        \item The Insert operation runs in time $O\left(d\cdot (n/\delta)^{O(1/c^2)}\right)$. 
        \item With probability at least $1-\delta$, the Query operation satisfies the following. Given $p\in P$, if there exists an inserted point within distance $R$ from $p$, then Query($p$) returns a point $q$ with $\dist(p,q) \leq cR$. Moreover, the running time is 
        time is $(n/\delta)^{O(1/c^2)}$.
    \end{enumerate}
     Furthermore, the data structure  is \emph{monotone under insertions}: the distance between $p$ and  Query($p$) is non-increasing after inserting more points.
     \label{thm:cnn_data_structure}
\end{theorem}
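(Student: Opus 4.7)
The plan is to establish the three claims in turn---insertion time, query correctness together with query running time under the stated failure probability, and monotonicity---by combining the LSH guarantees of~\cref{thm:LSH_hash_family} with the explicit parameter choices. A useful preliminary is to extract three identities from the definitions of $m$, $\ell$, and $\eta$: since $\rho = \log(1/p_1)/\log(1/p_2)$, we have $p_1 = p_2^\rho$ and hence $p_1^m = (p_2^m)^\rho = \eta^\rho$; the choice $\ell = 100\log(1/\eta)\,\eta^{-\rho}$ gives $\ell\,\eta^\rho = 100\log(1/\eta)$; and $\eta = (\delta/n)^{3/(1-\rho)}$ yields $\eta^{1-\rho} = (\delta/n)^3$. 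These three identities drive all the probability and running-time estimates.

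Monotonicity is immediate from how Insert behaves: appending to the end of each bucket's linked list means that, for any query point $p$ and table $i$, the ``first element of $T_i[f_i(p)]$ within distance $cR$ of $p$'' either stays the same, or begins to exist when it did not before---its distance is never made worse. Taking the minimum across the $\ell$ tables is monotone in each coordinate, so $\dist(p, \text{Query}(p))$ is non-increasing. For insertion time I would bound the cost of computing the $\ell$ concatenated hashes: by~\cref{thm:LSH_hash_family} each $h_{i,j}(p)$ costs $O(d\,n^{o(1)})$, so all $\ell m$ evaluations plus the $O(\ell)$ list appends cost $O(\ell m d\,n^{o(1)})$; plugging in $\ell m = O(\log^2(1/\eta)\,\eta^{-\rho})$ and using $\rho = 1/c^2 + o(1)$ yields the stated $O\!\left(d\cdot (n/\delta)^{O(1/c^2)}\right)$.

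For correctness and query time I would apply a union bound over two bad events. Fix $p^* \in P$ with $\dist(p, p^*) \le R$ (otherwise there is nothing to prove). The first bad event $E_1$ is that $p^*$ collides with $p$ in \emph{no} table; across the $\ell$ independent $f_i$'s, $\Pr[E_1] \le (1 - p_1^m)^\ell = (1 - \eta^\rho)^\ell \le e^{-\ell\eta^\rho} = \eta^{100}$, which is far below $\delta/2$. Conditioning on $\neg E_1$, some $T_i[f_i(p)]$ contains $p^*$, so its first-within-$cR$ element exists and has distance at most $cR$, giving correctness. The second bad event $E_2$ is that at least one point of $P$ at distance greater than $cR$ from $p$ collides with $p$ in some table. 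Each such collision has probability at most $p_2^m = \eta$, so the expected total count is at most $n\ell\eta = 100\log(1/\eta)\,\delta^3/n^2$, and Markov gives $\Pr[E_2] \le \delta/2$. On $\neg E_2$ the list scans perform no distance computations on far points, so the query cost is dominated by the $O(\ell m d\,n^{o(1)})$ hashing cost, matching the claimed $(n/\delta)^{O(1/c^2)}$ bound.

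The main subtlety is the parameter juggling: the choices of $m$, $\ell$, $\eta$ must simultaneously drive both bad-event probabilities below $\delta/2$ while keeping the hash-family cost at $(n/\delta)^{O(1/c^2)}$. All three constraints hinge on the identity $p_1^m = \eta^\rho$ together with $\rho = 1/c^2 + o(1)$ from~\cref{thm:LSH_hash_family}; once these are in place the remaining arguments are routine Markov/union-bound bookkeeping, and monotonicity requires no probabilistic analysis at all.
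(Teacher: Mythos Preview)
Your proposal is correct and follows essentially the same approach as the paper: both arguments verify monotonicity directly from the append-to-end semantics, bound insertion time as $O(\ell m \cdot d n^{o(1)})$, and handle correctness and query time via the two standard LSH calculations (probability that a close point misses all tables, and probability that a far point collides in some table).

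The one structural difference worth noting is scope. The paper packages the two probability estimates into separate lemmas and then proves a corollary stating that with probability $1-\delta$ the hash functions are simultaneously good for \emph{every} pair $p,q\in P$ (taking a union bound over all ${n\choose 2}$ pairs and all $\ell$ tables, which the choice of $\eta$ is designed to absorb). Your analysis fixes a single query point $p$ and a single witness $p^*$, yielding a per-query guarantee. Both readings are consistent with the theorem as stated, and your bounds are comfortably slack enough to be upgraded to the uniform version by the same $n^2$ union bound if needed downstream. A minor stylistic point: your invocation of Markov for $E_2$ is fine, but since $E_2$ is the event that the collision count is at least $1$, a direct union bound ($\Pr[E_2]\le n\ell\eta$) gives the identical estimate without the detour.
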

Throughout the analysis we assume that $c$ is a large enough constant and that $n$ is sufficiently large. This is motivated by the fact that otherwise a trivial data structure can achieve the bounds claimed by the theorem. 

The analyses of the monotonicity property and the running time of the insertion operation are rather immediate:
\begin{itemize}
    \item The monotonicity property is by definition of the operations Insert and Query. To see that,  suppose we run Query$(p)$ for a point $p$. We will argue that inserting any new point $p'$ may not increase the distance $\dist(p, \text{Query}(p))$. Indeed, when $p'$ is inserted it is appended to the end of the linked-lists $T_i[f_i(p)]$ for $i = 1,2, \ldots, \ell$. Now when we execute Query($p$) the only way that $p'$ will be one of the candidate points $q_1, q_2, \ldots, q_\ell$ is if it, for some $i\in \{1, 2, \ldots, \ell\}$, is the first point in $T_i[f_i(p)]$ within distance $cR$ from $p$. It follows (since insertions are appended at the end of the linked-lists whereas queries inspects the lists from the beginning) that all the the points that were candidates before the insertion of $p'$ are still candidates. Therefore the distance from $p$ to the minimum distance point (of the candidates) can only decrease after inserting a new point $p'$. 
    \item  We proceed to analyze the running time of the Insert operation. On the insertion of a point $p\in P$, it is appended to each of the $\ell$ linked lists $T_1[f_1(p)], T_2[f_2(p)], \ldots, T_\ell[f_\ell(p)]$. Appending an element to a linked list takes $O(1)$ time whereas the cost of calculating a single hash $f_i(p)$ is $m$ times the cost of calculating $h(p)$ for a single $h\in \cH$, which in turn by~\cref{thm:LSH_hash_family} is $O(dn^{o(1)})$.  
The running time of an insertion is therefore dominated by the time it takes to calculate the $\ell$ hashes $f_1(p), f_2(p), \ldots, f_\ell(p)$, which by the above arguments takes time
\begin{align*}
    \ell \cdot m \cdot O(dn^{o(1)}) & = O\left( \log(1/\eta) \cdot (1/\eta)^\rho\right) \cdot O\left( \log(1/\eta)\right ) \cdot O(dn^{o(1)})\\
    & = O\left( \log(n/\delta)^2 \cdot (n/\delta)^{3\rho/(1-\rho)}\right)\cdot O(dn^{o(1)}) \\
    & =  O\left(d \cdot (n/\delta)^{O(1/c^2)}\right),\\
\end{align*}
where we used that $c$ is a large enough constant for the last equality. 
\end{itemize}

We proceed to analyze the Query operation which requires a little more work. In order to guarantee that Query returns a nearby point if one exists, we need bound the probability of having a false negative. On the other hand, to bound the running time of the Query operation we need to bound the false positives. 
The following two lemmas bounds these quantities, starting with the probability of false positives.
\begin{lemma}
    For any $i\in \{1, 2, \ldots, \ell\}$ and two points $p, q\in \mathbb{R}^d$ with $\dist(p,q) \geq cR$, we have
    \begin{align*}
        \Pr[ f_i(p) = f_i(q)] \leq \eta\,.
    \end{align*}
    \label{lemma:false_positive}
\end{lemma}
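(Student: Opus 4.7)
The plan is to unfold the definition of $f_i$ and apply independence together with the sensitivity property of $\cH$. Recall that $f_i(p) = [h_{i,1}(p), h_{i,2}(p), \ldots, h_{i,m}(p)]$, where the $h_{i,j}$ are drawn independently from the $(R, cR, p_1, p_2)$-sensitive family $\cH$. Hence $f_i(p) = f_i(q)$ if and only if $h_{i,j}(p) = h_{i,j}(q)$ for every $j\in\{1,\ldots,m\}$.

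First I would use the hypothesis $\dist(p,q)\geq cR$ together with the second bullet in the definition of locality-sensitive hashing: for every single $h\in\cH$, $\Pr[h(p) = h(q)]\leq p_2$. By independence of the $m$ samples, this gives
\[
\Pr[f_i(p) = f_i(q)] \;=\; \prod_{j=1}^{m} \Pr[h_{i,j}(p) = h_{i,j}(q)] \;\leq\; p_2^{\,m}.
\]

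Next I would substitute the choice $m = \log(1/\eta)/\log(1/p_2)$ and simplify. Writing $p_2^m = \exp\!\bigl(m\log p_2\bigr)$ and plugging in $m\log p_2 = -\log(1/\eta)$, we obtain $p_2^m = \eta$, which yields the desired bound.

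I do not anticipate a real obstacle here: the lemma is essentially a direct calculation from the definitions, the key points being (i) invoking the $p_2$ upper bound, which requires exactly the hypothesis $\dist(p,q)\geq cR$ (so the sensitivity definition applies), and (ii) the fact that the $h_{i,j}$ are \emph{independent} draws from $\cH$, which justifies multiplying the probabilities. The only mild care needed is to confirm that the bound $\Pr[h(p)=h(q)]\leq p_2$ holds for every pair with $\dist(p,q)\geq cR$, not just in expectation, which is exactly what the definition of $(R,cR,p_1,p_2)$-sensitivity provides.
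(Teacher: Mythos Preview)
Your proposal is correct and matches the paper's proof essentially verbatim: the paper also invokes independence of the $h_{i,j}$ to get $\Pr[f_i(p)=f_i(q)] \leq p_2^m$ and then notes that the choice of $m$ makes $p_2^m = \eta$.
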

\begin{proof}
    By the independence of $h_{i,1}, h_{i,2}, \ldots, h_{i,m}$, we have
    \begin{align*}
        \Pr[f_i(p) = f_i(q)] & = \Pr_{h\sim \cH}[h(p) = h(q)]^m \leq p_2^m\,,   
    \end{align*}
    which by the selection of $m$ equals $\eta$.
\end{proof}

\begin{lemma}
    For any two points $p, q\in \mathbb{R}^d$ with $\dist(p,q) \leq R$, we have
    \begin{align*}
          \Pr[\exists i \mid f_i(p) = f_i(q)]  \geq 1-\eta\,.  
    \end{align*}
    \label[lemma]{lemma:false_negative}
\end{lemma}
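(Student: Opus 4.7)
The plan is to exploit independence at two levels: independence of the $m$ base hashes $h_{i,1}, \ldots, h_{i,m}$ composing each $f_i$, and independence across the $\ell$ hash tables. The only sensitivity property I will need is the lower bound: since $\dist(p,q) \leq R$, for every $h\sim\cH$ we have $\Pr[h(p)=h(q)] \geq p_1$.

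First I would fix an index $i\in\{1,\ldots,\ell\}$ and bound the \emph{collision} probability for the composed hash $f_i$. Using that $h_{i,1},\ldots,h_{i,m}$ are drawn independently from $\cH$,
\[
  \Pr[f_i(p)=f_i(q)] = \prod_{j=1}^{m}\Pr[h_{i,j}(p)=h_{i,j}(q)] \geq p_1^m.
\]
Next I would rewrite $p_1^m$ in terms of the parameters $\eta$ and $\rho$ by plugging in $m = \log(1/\eta)/\log(1/p_2)$:
\[
  p_1^m = p_1^{\log(1/\eta)/\log(1/p_2)} = \eta^{\log(1/p_1)/\log(1/p_2)} = \eta^{\rho}.
\]
So each table ``fails to witness'' the collision with probability at most $1-\eta^\rho$.

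Now I would use the independence across the $\ell$ choices of $f_i$ to amplify. Since the hash functions across tables are independent,
\[
  \Pr[\forall\, i\colon f_i(p)\neq f_i(q)] \leq (1-\eta^\rho)^{\ell} \leq \exp(-\ell\,\eta^\rho).
\]
Substituting $\ell = 100\log(1/\eta)(1/\eta)^\rho$ gives $\ell\,\eta^\rho = 100\log(1/\eta)$, hence
\[
  \Pr[\forall\, i\colon f_i(p)\neq f_i(q)] \leq \eta^{100}.
\]
Finally, since $\eta = (\delta/n)^{3/(1-\rho)} \leq 1$ (assuming $n$ is large enough and $c$ large enough so $\rho<1$), we have $\eta^{100} \leq \eta$, which yields the claimed bound $\Pr[\exists i \mid f_i(p)=f_i(q)] \geq 1-\eta$.

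There is no real obstacle; the only non-routine step is matching the exponent $\rho$ to the definition of $\eta$ and verifying that the choice of $\ell$ kills the failure probability. The ``cushion'' factor $100$ in the definition of $\ell$ is there precisely so that $\eta^{100}\le\eta$ trivially, and also so that the union bound over the at most $n^2$ relevant pairs $(p,q)$ (performed elsewhere when deriving \cref{thm:cnn_data_structure} from this lemma) still leaves a failure probability below $\delta$.
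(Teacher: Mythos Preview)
Your proof is correct and follows essentially the same approach as the paper: both use independence of the $m$ base hashes to get $\Pr[f_i(p)=f_i(q)]\geq p_1^m=\eta^\rho$, then independence across the $\ell$ tables to bound the overall failure probability by $(1-\eta^\rho)^\ell$. The only difference is that the paper simply writes ``$\geq 1-\eta$ by the selection of $\ell$'' for the final step, whereas you spell out the computation $(1-\eta^\rho)^\ell\leq\exp(-\ell\eta^\rho)=\eta^{100}\leq\eta$ explicitly.
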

\begin{proof}
 Similar to the calculations in the proof of the previous lemma, we have
\begin{align*}
    \Pr[\exists i \mid f_i(p) = f_i(q)] &= 1 - \Pr[\forall i, f_i(p) \neq f_i(q)] \\
    &= 1 - \Pr[f_i(p) \neq f_i(q)]^{\ell} \\
    &\geq 1 - (1 - p_1^m)^{\ell}\,.
\end{align*} 
    By the definition of $\rho$, $p_1 = p_2^\rho$ and so 
\begin{align*}
    \Pr[\exists i \mid f_i(p) = f_i(q)] 
    &\geq 1 - (1 - p_2^{\rho m})^{\ell} \\ 
    & =  1 - \left(1- \eta^{\rho}\right)^\ell  \\
    & \geq 1- \eta\,,
\end{align*} 
where the last inequality is by the selection of $\ell$.
\end{proof}


Equipped with these two lemmas we are now ready to analyze the Query operaton. Specifically, we have the following corollary:
\begin{corollary}
    Consider a set $P$ of $n$ points in $\mathbb{R}^d$. Then with probability at least $1- \delta$ we have that the hash functions $f_1, f_2, \ldots, f_\ell$ satisfy the following:
    \begin{itemize}
        \item For any $p,q\in P$ with $\dist(p,q) \geq cR$, we have $f_i(p) \neq f_i(q)$ for all $i\in \{1,2, \ldots, \ell\}$.
        \item For any $p,q \in P$ with $\dist(p,q) \leq R$, we have that there is an $i\in \{1,2,\ldots, \ell\}$ such that $f_i(p) = f_i(q)$. 
    \end{itemize}
\end{corollary}
Before giving the proof of the corollary, note that the first property implies that we have no false positives. Therefore, the running time of Query is the same as for Insertion: it is dominated by the time to calculate the $\ell$ hash functions which is $O\left(d\cdot (n/\delta)^{O(1/c^2)}\right)$. Moreover, the second property guarantees that we always have a hash collision when there is a nearby point of the query-point $p$. This implies that Query($p$) returns a point $q$ with $\dist(p,q) \leq cR$ if there is a point within distance $R$ from $p$ that has been inserted.  To complete the proof of~\cref{thm:cnn_data_structure} it thus remains to prove the corollary:
\begin{proofof}{Corollary}
    We show that each of the two properties hold fail probability at most $\delta/2$. The statement then follows by the union bound. 
    
    For the first property, there are $\ell$ hash functions and at most ${n\choose 2} \leq n^2$ pairs $p,q\in P$ such that $\dist(p,q) \geq cR$. Therefore, by~\cref{lemma:false_positive} and the union bound, we have that the first property fails with probability at most
    \begin{align*}
        \ell \cdot n^2 \cdot \eta & =  \left( 100 \cdot \log(1/\eta) \cdot (1/\eta)^\rho \right) \cdot n^2 \cdot  \eta \\
        & =  \left( 100 \cdot \log(1/\eta) \right) \cdot n^2 \cdot  \eta^{1-\rho} \\
        &= \left( \frac{300}{1-\rho} \cdot \log(n/\delta) \right) \cdot n^2 \cdot \left(\delta/n)\right)^3 \\
        & \leq \delta/2\,,
    \end{align*}
    where for the last inequality we used that $n$ and $c$ are large.
    
    For the second property, there are at most ${n\choose 2} \leq n^2$ pairs $p,q\in P$ such that $\dist(p,q) \leq R$. So by the union bound and~\cref{lemma:false_negative}, we have that the second property fails with probability at most
        $n^2\cdot \eta$ which by the above calculations is at most $\delta/2$.
\end{proofof}

\subsection{Putting everything together: Proof of~\cref{thm:data_structure}}

The proof of~\cref{thm:data_structure} now follows from~\cref{thm:cnn_data_structure} by standard arguments. Again we assume that $c$ is a large constant (since otherwise a trivial data structure will satisfy the properties of the theorem).  

Recall that all distances are between $\MaxDist/(2\Delta)$ and $\MaxDist$. We make $\log(2\Delta)$ many copies of the gap data structure guaranteed by~\cref{thm:cnn_data_structure}. Each of the copies will have an error parameter $\delta = \frac{1}{n \log (2\Delta)}$ and  the $i$:th copy will have parameters $(c_i, R_i)$ with $c_i = c/2$ and $R_i = 2^{i-1} \MaxDist/(2\Delta)$. The operations now work as follows:
\begin{itemize}
    \item Insert($p$): the point $p\in P$ is inserted into each  of the $\log(2\Delta)$ copies of the gap data structure;
    \item Query($p$): we query the point $p$ in each of the $\log(2\Delta)$ copies and out of the returned points, we return the closest to $p$.
\end{itemize}
Since the gap data structure of~\cref{thm:cnn_data_structure} is monotone we have that the resulting data structure satisfies  monotonicity. That it succeeds with probability at least $1-1/n$ follows from the selection of $\delta$ and the union bound over $\log(2\Delta )$ many copies of the gap data structure. Furthermore the guarantees of the query operation (to find a nearby point) is satisfied: let $q$ be the closest point to $p$ and suppose that $2^{i-2} \cdot \MaxDist/(2\Delta)\leq \dist(p,q) \leq 2^{i-1} \cdot \MaxDist/(2\Delta)$ . Then, on Query($p$), the $i$:th copy of the gap data structure is guaranteed to return a point within distance $c/2\cdot 2^{i-1} \cdot \MaxDist/(2\Delta) = c \cdot 2^{i-2} \cdot \MaxDist/(2\Delta) \leq c \cdot \dist(p,q)$ of $p$.  Finally the running time of the operations is $\log(2\Delta)$ times the running time of each operation in the gap data structure. Hence, since $\delta = 1/(n \log(2\Delta))$, the running time of the operations is  ${O}\left(\log(2\Delta) \cdot d\cdot \left(n^2\log(2\Delta )\right)^{O(1/c^2)}\right) = {O}\left( d\log(\Delta)\cdot \left(n\log(\Delta )\right)^{O(1/c^2)}\right)$ as required.

\subsection{LSH Parameters in our Experiments}
\label{appx:LSHparams}
We use the locality sensitive hash families based on 
$p$-stable distribution introduced by Datar et al.~\cite{datar2004locality}.
We set the parameters as follows. We work with one scale, and we set the number
of hash functions to be 15. Moreover, we set the collision parameter (referred
to as $r$ in~\cite{datar2004locality} to be 10).

\section{\rejsam Algorithm Analysis}
In this section we analysis the \rejsam algorithm. We start by proving approximation guarantee and then stating the main theorem.

\subsection{Analysis of Approximation Guarantee} \label{app:rejsam-approx}
In this section we prove that \rejsam has an approximation guarantee of $O(c^6\log(k))$. Hence, for a fixed $c$, it has the same asymptotic approximation guarantee as the standard implementation of \kmpp but with the advantage that  it runs in near-linear time. 
For simplicity we  assume that the LSH data structure is successful throughout the whole analysis. That is, for any $p\in P$, $\text{Query}(p)$ returns a point within distance $c\cdot \delta$  where $\delta$ is the minimum distance from $p$ to a point inserted in the data structure. 

\cref{thm:rejsam_main} says that the probability to sample a center in \rejsam is very close to the same $D^2$-distribution as in \kmpp. At first, it therefore appears rather intuitive that they should have the same approximation guarantee. However, the analysis of \kmpp is rather sensitive to even small perturbations to the probability of sampling a center. Indeed, in a recent paper~\cite{noisykmeanspp}, it was proved that the version of \kmpp where centers are sampled using an approximation of the $D^2$-distribution achieves an approximation guarantee of $O(\log(k)^2)$. 
To get a tight guarantee of $O(\log(k))$ was raised as an open problem. 
Our analysis does not resolve this question. Instead we use the additional \emph{monotonicity} property of our LSH data structure (see~\cref{thm:rejsam_main}) to circumvent the most technical difficulty of~\cite{noisykmeanspp}. This allows us to establish the tight asymptotic approximation guarantee of our procedure. Similarly to the proof in~\cite{noisykmeanspp}, our analysis closely follows  Dasgupta's analysis of \kmpp~\cite{Dasgupta}. The main difference is a slight change of the ''potential'' function (see~\eqref{eq:potential}). However, for the sake of completeness, we reproduce the complete analysis here. 

\paragraph{Notation:} Throughout the proof, we use the following notation. For a set $P'\subseteq P$ of the points and an (ordered) set of centers $S = \{s_1, \ldots, s_i\}$ let
\begin{itemize}
    \item  $\Phi(P', S)$ be the $k$-means cost of data points $P'$ with respect to the centers $S$, i.e.,
    \begin{align*}
        \Phi(P', S) = \sum_{x\in P'} \dist(x, S)^2\,,
    \end{align*}
    \item  $\PhiLSH(P', S)$ be the $k$-means cost of data set $P'$ with respect to  the centers $S$ when using the assignment given by the LSH data structure, i.e.,
    \begin{align*}
        \PhiLSH(P', S) = \sum_{x\in P'} \dist(x, \text{Query}(x))^2\,,
    \end{align*}
    where the points of $S$ have been inserted into the data structure in the order $s_1, s_2, \ldots, s_i$. 
\end{itemize}
Furthermore, we  denote by $\OPT_i(P)$ the cost of an optimal clustering of the data points $P$ using $i$ centers and we let $C_1^*, C_2^*, \ldots, C_k^*$ be the partition of $P$ into $k$ cluster in a fixed optimal solution (with $k$ centers).

\subsubsection{Two preliminary lemmas}
We start our analysis with two preliminary lemmas which are very similar to lemmas in~\cite{noisykmeanspp}, which in turn are based on similar lemmas in  the original \kmpp paper~\cite{arthur2007k}. 

As the first center is chosen uniformly at random in both \rejsam and \kmpp, we can reuse the following statement from the original analysis.
\begin{lemma}[Lemma~$3.1$ in~\cite{arthur2007k}]
    Let $s_1$ denote the first center chosen by \rejsam. For each optimal cluster $C^*_i$,
    \begin{align*}
        \ee{\Phi(C^*_i, \{s_1\}) \mid s_1 \in C^*_i} \leq 2 \cdot \OPT_1(C^*_i)\,.
    \end{align*}
    \label{lem:first_cluster}
\end{lemma}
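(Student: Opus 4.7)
The plan is to follow the classical argument of Arthur--Vassilvitskii, which hinges on the well-known centroid identity: for any finite set $A \subseteq \mathbb{R}^d$ and any point $y \in \mathbb{R}^d$, if $\mu(A) \eqdef \frac{1}{|A|}\sum_{x\in A} x$ denotes the centroid, then
\[
    \sum_{x\in A} \dist(x,y)^2 = \sum_{x\in A}\dist(x,\mu(A))^2 + |A|\cdot \dist(y,\mu(A))^2.
\]
I would state this identity first (it follows from expanding $\|x-y\|^2 = \|x-\mu(A)\|^2 + 2\langle x-\mu(A),\mu(A)-y\rangle + \|\mu(A)-y\|^2$ and noting that the cross term vanishes by the definition of the centroid). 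Specializing to $A = C^*_i$, the first term on the right is exactly $\OPT_1(C^*_i)$.

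Next I would observe that in \rejsam the very first center $s_1$ is drawn uniformly at random from $P$, so conditioned on $s_1 \in C^*_i$ it is uniform on $C^*_i$ (this is the only place where the specifics of our algorithm enter, and it is identical to \kmpp). Therefore
\[
    \ee{\Phi(C^*_i,\{s_1\}) \mid s_1 \in C^*_i}
    = \frac{1}{|C^*_i|}\sum_{y\in C^*_i}\sum_{x\in C^*_i}\dist(x,y)^2.
\]

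Applying the centroid identity inside the inner sum with $A = C^*_i$ and summing over $y$ gives
\[
    \ee{\Phi(C^*_i,\{s_1\}) \mid s_1 \in C^*_i}
    = \OPT_1(C^*_i) + \sum_{y\in C^*_i}\dist(y,\mu(C^*_i))^2
    = 2\,\OPT_1(C^*_i),
\]
since the second sum is also $\OPT_1(C^*_i)$ by definition of $\mu(C^*_i)$ as the $1$-means optimal center of $C^*_i$. This yields the claimed bound.

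There is essentially no obstacle here: the only algorithmic fact needed is that $s_1$ is drawn uniformly from $P$ (hence uniformly from $C^*_i$ under the stated conditioning), and the rest is a one-line calculation with the centroid identity. The main thing to be careful about is that we condition on $s_1 \in C^*_i$ (not on a specific point), so the outer average is over $y\in C^*_i$ with the uniform measure $1/|C^*_i|$, matching the factor introduced by the centroid identity to produce exactly $2\,\OPT_1(C^*_i)$.
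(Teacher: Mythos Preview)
Your argument is correct and is precisely the classical proof from Arthur--Vassilvitskii. Note that the present paper does not give its own proof of this lemma; it simply cites it as Lemma~3.1 of~\cite{arthur2007k}, and what you have written is that original argument.
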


For the next lemma, we use that \cref{thm:rejsam_main} says that a center $s$ is sampled with a probability in $[q/c^2, q\cdot c^2]$ where $q$ denotes the probability that $s$ would be sampled by the $D^2$-distribution. This allows us to use Lemma~$5$  in the noisy \kmpp analysis:

\begin{lemma}[Lemma~$5$ in~\cite{noisykmeanspp}]
    Consider \rejsam after at least one center has been opened  and let $S \neq \emptyset$ denote the current set of centers. We denote by $s$ the next sampled center. Then for any $S\neq 0$ and any optimal cluster $C_i^*$,
    \begin{align*}
        \ee{\Phi(C_i^*, S \cup \{s\}) \mid S, s \in C_i^*} \leq 8 c^4 \cdot \OPT_1 (C_i^*)\,.
    \end{align*}
    \label{lem:after_first_iteration}
\end{lemma}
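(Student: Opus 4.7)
The plan is to reduce the statement to Arthur--Vassilvitskii's classical single-cluster lemma (Lemma~3.2 of~\cite{arthur2007k}), which shows that if $s$ is drawn from the pure $D^2$-distribution restricted to an optimal cluster $C_i^*$ then $\ee{\Phi(C_i^*, S\cup\{s\})\mid s\in C_i^*} \le 8\,\OPT_1(C_i^*)$. Our sampling only approximates this distribution, so we will pay an additional multiplicative factor $c^4$ to absorb the distortion in the conditional probabilities, in the same spirit as Lemma~5 of~\cite{noisykmeanspp}.

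First I would make the sampling probabilities explicit. By \cref{lemma:dist} and the hypothesis $S\neq\emptyset$, the unconditional probability that $s=a_0$ equals $\dist(a_0,\text{Query}(a_0))^2 / \sum_{y\in P}\dist(y,\text{Query}(y))^2$, so after conditioning on $s\in C_i^*$ the normalising sum in the denominator is restricted to $C_i^*$. Using that the LSH data structure is assumed successful (\cref{thm:data_structure}) and that Query$(y)$ returns a point that has already been inserted, I get the sandwich $\dist(y,S) \le \dist(y,\text{Query}(y)) \le c\cdot \dist(y,S)$ for every $y\in P$. Consequently both the numerator and the denominator of the conditional probability are within a factor $c^2$ of the corresponding $D^2$ quantities, yielding
\begin{align*}
    \Pr[s=a_0\mid s\in C_i^*] \;\le\; c^4 \cdot \frac{\dist(a_0,S)^2}{\sum_{y\in C_i^*}\dist(y,S)^2}\,.
\end{align*}
Writing the target expectation as $\sum_{a_0\in C_i^*} \Pr[s=a_0\mid s\in C_i^*]\cdot \Phi(C_i^*, S\cup\{a_0\})$, pulling the $c^4$ factor outside, and invoking Lemma~3.2 of~\cite{arthur2007k} on the remaining sum gives the bound $8c^4\,\OPT_1(C_i^*)$.

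The main obstacle is simply tracking both $c^2$ distortions (one in the numerator and one in the denominator of the conditional probability) so that the total factor is $c^4$ and not $c^2$; this is exactly the place where the LSH monotonicity, which is crucial elsewhere for the global $O(\log k)$ approximation argument, does not buy anything extra at the single-cluster level. Once this bookkeeping is in place, the remainder of the argument is a direct appeal to the classical \kmpp cluster analysis.
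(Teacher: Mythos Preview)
Your proposal is correct and matches the paper's approach. The paper does not give its own proof of this lemma; it simply observes (just before the statement) that \rejsam samples each center with probability in $[q/c^2,\,qc^2]$ for the $D^2$-probability $q$, and then invokes Lemma~5 of~\cite{noisykmeanspp} as a black box. Your argument is exactly the unpacking of that black box: write the conditional probability explicitly via \cref{lemma:dist}, use the LSH sandwich $\dist(y,S)\le \dist(y,\text{Query}(y))\le c\cdot\dist(y,S)$, and reduce to Lemma~3.2 of~\cite{arthur2007k}. One incidental remark: in your direct computation the denominator only needs the lower bound $\dist(y,\text{Query}(y))\ge \dist(y,S)$, which carries no factor of $c$, so the conditional probability is in fact bounded by $c^2$ (not $c^4$) times the $D^2$ conditional probability; the looser $c^4$ you wrote still suffices for the stated lemma and is what one gets from the paper's more abstract route through the two-sided bound on the unconditional probabilities.
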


\subsubsection{Dasgupta's potential argument}

Consider a run of \rejsam and let $S_i = \{s_1, s_2, \ldots, s_i\}$ denote the first $i$ centers chosen by \rejsam (for notational convenience, we let $S_0 = \emptyset$). 
We say that a cluster $C_j^*$ of the optimal solution is covered by $S_i$ if one of its centers is in $C_j^*$. 
Otherwise we say that this cluster is uncovered. 
For $i\in \{0,1, \ldots, k\}$, let $H_i$ and $U_i$ denote the set of all points from $P$ that, with respect to $S_i$, belong to covered and uncovered optimal clusters, respectively. 
Also let  $u_i$ denote the number of uncovered clusters after $i$ centers were opened. 
Finally, we say that a center $s_i$ is wasted if  $s_i \in H_{i-1}$, i.e., if the $i$:th center $s_i$ does not cover a previously uncovered cluster.

The following is an immediate corollary of the two preliminary lemmas; it is Corollary~$6$ in~\cite{noisykmeanspp}.
\begin{corollary}
    For any $i\in [k]$,
    \begin{align*}
        \ee{\Phi(H_i, S_i)} \leq 8c^4 \cdot \OPT_k(P)\,.
    \end{align*}
    \label[corollary]{cor:cover_bound}
\end{corollary}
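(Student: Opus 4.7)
The plan is to decompose $\Phi(H_i, S_i)$ into the contributions of each covered optimal cluster and then bound each contribution by a direct application of the two preliminary lemmas. For each $j \in [k]$, let $\tau_j$ be the first iteration at which the sampled center lies in the optimal cluster $C_j^*$; by definition $C_j^*$ is covered at step $i$ precisely when $\tau_j \leq i$. Since opening additional centers only decreases each point's distance to its nearest center, we have $\Phi(C_j^*, S_i) \leq \Phi(C_j^*, S_{\tau_j})$ whenever $\tau_j \leq i$, so it suffices to bound $\ee{\Phi(C_j^*, S_{\tau_j}) \cdot \mathbf{1}[\tau_j \leq i]}$ for each $j$ separately and then sum.

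Next I would split on the value of $\tau_j$. The case $\tau_j = 1$ is handled by \cref{lem:first_cluster}, which gives $\ee{\Phi(C_j^*, S_1) \mid s_1 \in C_j^*} \leq 2\,\OPT_1(C_j^*)$. For $\tau_j = t > 1$, I would condition on the realized prefix $S_{t-1}$; the event $\{\tau_j = t\}$ forces $S_{t-1}$ to be disjoint from $C_j^*$, and for every such $S_{t-1}$ the conditional bound from \cref{lem:after_first_iteration} applies to give $\ee{\Phi(C_j^*, S_t) \mid S_{t-1}, s_t \in C_j^*} \leq 8c^4\,\OPT_1(C_j^*)$. Averaging over the distribution of $S_{t-1}$ conditioned on $\{\tau_j = t\}$ preserves this bound, and since $8c^4 \geq 2$ for $c \geq 1$, both cases yield the uniform estimate $\ee{\Phi(C_j^*, S_{\tau_j}) \mid \tau_j = t} \leq 8c^4\,\OPT_1(C_j^*)$.

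The corollary then follows by the tower property: $\ee{\mathbf{1}[\tau_j \leq i] \cdot \Phi(C_j^*, S_{\tau_j})} \leq 8c^4\,\OPT_1(C_j^*) \cdot \Pr[\tau_j \leq i] \leq 8c^4\,\OPT_1(C_j^*)$, and summing over $j$ together with the identity $\sum_{j=1}^k \OPT_1(C_j^*) = \OPT_k(P)$ yields the claimed bound on $\ee{\Phi(H_i, S_i)}$.

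The only mild obstacle is matching the form of conditioning in \cref{lem:after_first_iteration}, which is stated for a fixed prefix $S$ together with the event $\{s \in C_j^*\}$, to the event $\{\tau_j = t\}$ that appears naturally in the decomposition. Once one observes that $\{\tau_j = t\}$ equals $\{s_t \in C_j^*\} \cap \{S_{t-1} \cap C_j^* = \emptyset\}$ and that the lemma's bound holds uniformly over all eligible $S_{t-1}$, the averaging step is routine and the remainder is bookkeeping — which is precisely why the authors call this an \emph{immediate} corollary.
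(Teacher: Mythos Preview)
Your proposal is correct and is exactly the standard argument the paper is alluding to when it calls this an ``immediate corollary of the two preliminary lemmas'' (it is cited as Corollary~6 in~\cite{noisykmeanspp} without further proof). Your decomposition over covered clusters, the monotonicity step $\Phi(C_j^*,S_i)\le\Phi(C_j^*,S_{\tau_j})$, the case split $\tau_j=1$ versus $\tau_j>1$ invoking \cref{lem:first_cluster} and \cref{lem:after_first_iteration} respectively, and the final summation using $\sum_j \OPT_1(C_j^*)=\OPT_k(P)$ are precisely the intended steps.
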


The above corollary, says that the cost of covered clusters is at most a constant times the cost of an optimal solution. 
To bound the expected cost of uncovered clusters we use the argument of~\cite{Dasgupta}. 
It is based on a potential function argument. 
Define $W_i$ to be the number of wasted centers among the first $i$ centers. Hence $W_i$ equals $i$ minus the number of covered clusters. Further, let
\begin{align}
    \Psi_i =  W_i \cdot \frac{\PhiLSH(U_i, S_i)}{u_i}\,.
    \label{eq:potential}
\end{align}
Our potential $\Psi_i$ is different from the one used in~\cite{Dasgupta} in that we use $\PhiLSH$ instead of $\Phi$. This is the main difference and it is crucial for our analysis. 

For intuition, note that, for $i=0$, we have no wasted centers and all clusters are uncovered. So $W_0 = 0$ and $u_0 = k$ and $\Psi_0 = 0$. At the other end (for $i=k$), we have that the number of wasted centers equals the number of uncovered clusters, i.e., $W_k = u_k$, and so $\Psi_k$ equals the total cost of uncovered clusters. The definition of $\Psi_i$ allows us to bound this cost step-by-step. In particular, 
we will bound the expected increase of~\eqref{eq:potential} from $i$ to $i+1$, i.e., $\ee{\Psi_{i+1}- \Psi_i}$. We emphasize that the analysis is close to a verbatim transcript of that in~\cite{Dasgupta}; it is included for completeness.

In the following,  we let $\cF_{i}$ denote the realization of \rejsam of the first $i$ centers. Any realization $\cF_{i}$ determines e.g.\, the values of $\PhiLSH(U_{i}, S_{i})$ and $u_{i}$. 

We consider two cases: when the new center is in an uncovered cluster (\cref{lemma:uncovered_cluster}) and when it is in a covered cluster (\cref{lemma:covered_cluster}). 
\begin{lemma}[Lemma~8 in~\cite{Dasgupta}]
    Suppose that the $(i+1)$:th center $s= s_{i+1}$ is chosen in $U_i$. Then for any $\cF_i$
    \begin{align*}
        \ee{\Psi_{i+1} - \Psi_i\mid \cF_i, \{s\in U_i\} } \leq 0\,.
    \end{align*}
    \label[lemma]{lemma:uncovered_cluster}
\end{lemma}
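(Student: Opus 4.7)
The plan is to track the potential step by step on the event $s\in U_i$, following Dasgupta's conditional-expectation argument, but with one crucial modification: wherever the classical proof uses the fact that adding a center can only decrease $\Phi$, I would use the \emph{monotonicity} property of the LSH data structure from \cref{thm:data_structure} to derive the analogous inequality for $\PhiLSH$.

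First, I would record the easy bookkeeping: if $s\in U_i$, then $s$ lies in some uncovered optimal cluster $C^*_{j(s)}$, which becomes covered at step $i+1$. Hence $W_{i+1}=W_i$, $u_{i+1}=u_i-1$, and $U_{i+1}=U_i\setminus C^*_{j(s)}$. The degenerate cases $W_i=0$ (the potential vanishes at both endpoints) and $u_i=1$ (so $U_{i+1}=\emptyset$ and $\Psi_{i+1}=0$) are immediate, so I may assume $W_i\ge 1$ and $u_i\ge 2$.

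Next, the key deterministic inequality:
\[
\PhiLSH(U_{i+1},S_{i+1})\;\le\;\PhiLSH(U_i,S_i)\;-\;\PhiLSH(C^*_{j(s)},S_i).
\]
Here I would invoke monotonicity: since $S_{i+1}=S_i\cup\{s\}$, for every point $y$ the quantity $\dist(y,\text{Query}(y))$ only shrinks after inserting $s$; summing over $y\in U_{i+1}=U_i\setminus C^*_{j(s)}$ yields the displayed bound. This is exactly where the monotone LSH from \cref{sec:LSH-data-structure} is needed, and it is also the main obstacle compared with the classical \kmpp analysis -- for the plain potential $\Phi$ this step is trivial, whereas for $\PhiLSH$ it genuinely requires that the data structure's nearest-neighbor estimates can only improve as more centers are inserted.

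Finally, I would average over the random choice of $s$ given $\cF_i$ and the event $s\in U_i$. By \cref{lemma:dist}, conditional on $\cF_i$ and on $s\in U_i$, the probability that $s\in C^*_j$ equals $\PhiLSH(C^*_j,S_i)/\PhiLSH(U_i,S_i)$, so
\[
\ee{\PhiLSH(C^*_{j(s)},S_i)\mid \cF_i,\,s\in U_i}\;=\;\frac{\sum_{j}\PhiLSH(C^*_j,S_i)^2}{\PhiLSH(U_i,S_i)}\;\ge\;\frac{\PhiLSH(U_i,S_i)}{u_i},
\]
where the sum runs over the $u_i$ uncovered clusters and the inequality is Cauchy--Schwarz (equivalently the power-mean inequality). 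Plugging this into the deterministic bound and dividing by $u_{i+1}=u_i-1$ gives $\ee{\PhiLSH(U_{i+1},S_{i+1})/(u_i-1)\mid\cF_i,\,s\in U_i}\le \PhiLSH(U_i,S_i)/u_i$; multiplying by $W_{i+1}=W_i$ yields $\ee{\Psi_{i+1}\mid\cF_i,\,s\in U_i}\le\Psi_i$, which is the claimed inequality.
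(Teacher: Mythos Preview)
Your proposal is correct and follows essentially the same approach as the paper's proof: the same bookkeeping on $W_i,u_i,U_i$, the same use of the LSH monotonicity to bound $\PhiLSH(U_{i+1},S_{i+1})$ by $\PhiLSH(U_i,S_i)-\PhiLSH(C^*_{j(s)},S_i)$, and the same Cauchy--Schwarz lower bound on the expected covered-cluster cost. The only differences are cosmetic --- you treat the degenerate cases $W_i=0$ and $u_i=1$ explicitly and cite \cref{lemma:dist} for the sampling distribution, whereas the paper simply asserts that $s$ is sampled proportionally to $\PhiLSH(s,S_i)$.
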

\begin{proof}
    When $s$ belongs to an uncovered cluster $A$, we have $H_{i+1} = H_i \cup A$, $W_{i+1}= W_i$, $U_{i+1} = U_i \setminus A$, and $u_{i+1} = u_i -1$. Hence, using the monotonicity of the LSH data structure,
    \begin{align*}
        \Psi_{i+1} &= \frac{W_{i+1} \cdot \PhiLSH(U_{i+1}, S_{i+1})}{u_{i+1}} \\
        & \leq \frac{W_i\cdot (\PhiLSH(U_i, S_i) - \PhiLSH(A, S_i))}{u_i -1}\,.
    \end{align*}
    Let us bound the cost $\PhiLSH(A, S_i)$ for a randomly chosen uncovered cluster $A$. Here we use the notation $A(s)$ to denote the uncovered cluster so that $s\in A$. Since a point $s \in U_i$ is sampled proportional to $\PhiLSH(s, S_i)$
    \begin{align*}
        &\ee{\PhiLSH(A(s),S_i) \mid \cF_i, \{s\in U_i\}} \\
         &\qquad = \sum_{A} \frac{\PhiLSH(A, S_i)}{\PhiLSH(U_i, S_i)}\cdot \PhiLSH(A, S_i) \\
         & \qquad \geq \frac{\PhiLSH(U_i, S_i)}{u_i}\,,
    \end{align*}
    where the sum is over the $u_i$ uncovered clusters $A$ and the last inequality is by the Cauchy-Schwarz inequality.  
    Thus, $\ee{\Psi_{i+1} \mid \cF_i, \{s \in U_i\}}$ is at most
    \begin{align*}
    & \frac{W_i}{u_i -1} \left( \PhiLSH(U_i, S_i) - \ee{\PhiLSH(A(s), S_i) \mid \cF_i, \{s\in U_i\}}\right)\\
        & \qquad \leq \frac{W_i}{u_i -1} \left( \PhiLSH(U_i, S_i) - \frac{\PhiLSH(U_i, S_i)}{u_i}\right) = \Psi_i\,.
    \end{align*}
    
\end{proof}

\begin{lemma}[Lemma~9 in~\cite{Dasgupta}]
    Suppose that  $(i+1)$ center $s=s_{i+1}$ is chosen in $H_i$. Then for any $\cF_i$, $\Psi_{i+1}- \Psi_i \leq \PhiLSH(U_i, S_i)/ u_i$.
    \label[lemma]{lemma:covered_cluster}
\end{lemma}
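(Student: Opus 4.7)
The plan is to directly unfold the definition of $\Psi_i$ and exploit the fact that, since $s \in H_i$ means $s$ falls inside an already-covered optimal cluster, the ``covered/uncovered'' bookkeeping does not change at all from step $i$ to step $i+1$. Concretely, I would first observe that $H_{i+1} = H_i$, $U_{i+1} = U_i$, and $u_{i+1} = u_i$; the only counter that changes is $W_{i+1} = W_i + 1$, reflecting that $s$ is a wasted center. Consequently
\[
\Psi_{i+1} \;=\; (W_i + 1)\cdot \frac{\PhiLSH(U_i, S_{i+1})}{u_i}.
\]

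Next I would invoke the \emph{monotonicity under insertions} of the LSH data structure from \cref{thm:data_structure}: for every $x\in P$, the quantity $\dist(x, \text{Query}(x))$ is non-increasing as further centers are inserted. Summing over $x \in U_i$ therefore yields $\PhiLSH(U_i, S_{i+1}) \le \PhiLSH(U_i, S_i)$. Substituting this into the previous display gives
\[
\Psi_{i+1} \;\le\; (W_i + 1)\cdot \frac{\PhiLSH(U_i, S_i)}{u_i} \;=\; \Psi_i + \frac{\PhiLSH(U_i, S_i)}{u_i},
\]
which rearranges to the desired bound. Note that since the inequality holds pointwise for every realization of the coin flips producing $s$, it holds conditionally on $\cF_i$ and on $\{s \in H_i\}$ as well; no expectation over $s$ is actually needed here, unlike in the previous lemma.

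The main subtlety, and the only reason this step goes through with $\PhiLSH$ in place of the true $\Phi$ used in Dasgupta's original potential, is precisely the monotonicity of the LSH data structure. Without that property one would have to worry that a newly inserted center $s$ could somehow increase the reported $\text{Query}$ distance of some $x\in U_i$, which would break the inequality $\PhiLSH(U_i, S_{i+1}) \le \PhiLSH(U_i, S_i)$. The paper's earlier choice of modifying the standard LSH construction to append to (and read from the head of) linked lists is exactly what rules this out. So while the algebra here is essentially Dasgupta's, the conceptual work sits entirely in having set up a monotone $\PhiLSH$ as the potential.
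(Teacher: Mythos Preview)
Your proof is correct and follows essentially the same argument as the paper: both note that $H_{i+1}=H_i$, $U_{i+1}=U_i$, $u_{i+1}=u_i$, $W_{i+1}=W_i+1$, then invoke the monotonicity of the LSH data structure to bound $\PhiLSH(U_i,S_{i+1})\le\PhiLSH(U_i,S_i)$ and conclude. Your added commentary on why monotonicity is the key ingredient is accurate and matches the paper's emphasis.
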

\begin{proof}
    When $s$ is chosen from a covered cluster, we have $H_{i+1} = H_i, U_{i+1} = U_i, u_{i+1} = u_i$, and $W_{i+1} = W_i + 1$. Thus by the monotonicity of our data structure
    \begin{align*}
        \Psi_{i+1} - \Psi_i 
       & = \frac{W_{i+1}\cdot \PhiLSH(U_{i+1}, S_{i+1})}{u_{i+1}} - \frac{W_{i}\cdot \PhiLSH(U_{i}, S_{i})}{u_{i}} \\
        & \leq  \frac{(W_{i}+1)\cdot \PhiLSH(U_{i}, S_{i})}{u_{i}} - \frac{W_{i}\cdot \PhiLSH(U_{i}, S_{i})}{u_{i}} \\
        &= \frac{\PhiLSH(U_i, S_i)}{u_i}\,.
    \end{align*}
\end{proof}

Putting these two lemmas together gives a bound on the expected increase of the potential.
\begin{lemma}[Lemma~$10$ in~\cite{Dasgupta}]
    For $i\in \{0,1,\ldots, k-1\}$ and $\cF_i$, we have 
    \begin{align*}
        \ee{\Psi_{i+1}- \Psi_i \mid \cF_i} \leq \frac{\PhiLSH(H_i, S_i)}{ k-i}\,.
    \end{align*}
    \label[lemma]{lem:psi_bound}
\end{lemma}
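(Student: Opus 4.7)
The plan is to apply the law of total expectation by conditioning on whether the newly sampled center $s = s_{i+1}$ lands in an uncovered cluster ($s \in U_i$) or in an already covered cluster ($s \in H_i$), and to combine the two previous lemmas.

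First, I would write
\begin{align*}
\ee{\Psi_{i+1} - \Psi_i \mid \cF_i}
&= \Pr[s \in U_i \mid \cF_i] \cdot \ee{\Psi_{i+1} - \Psi_i \mid \cF_i, s \in U_i} \\
&\quad + \Pr[s \in H_i \mid \cF_i] \cdot \ee{\Psi_{i+1} - \Psi_i \mid \cF_i, s \in H_i}\,.
\end{align*}
By \cref{lemma:uncovered_cluster}, the first conditional expectation is at most $0$, so that term drops out. By \cref{lemma:covered_cluster}, the second conditional expectation is at most $\PhiLSH(U_i, S_i)/u_i$ (this bound holds deterministically, so averaging does no harm).

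Next I would use \cref{lemma:dist}, which tells us that the next center is sampled proportionally to $\dist(x, \text{Query}(x))^2$ once at least one center is opened; thus
\[
\Pr[s \in H_i \mid \cF_i] \;=\; \frac{\PhiLSH(H_i, S_i)}{\PhiLSH(H_i, S_i) + \PhiLSH(U_i, S_i)}\,.
\]
Combining the two displayed bounds yields
\[
\ee{\Psi_{i+1} - \Psi_i \mid \cF_i}
\;\leq\; \frac{\PhiLSH(H_i, S_i)}{\PhiLSH(H_i, S_i) + \PhiLSH(U_i, S_i)} \cdot \frac{\PhiLSH(U_i, S_i)}{u_i}\,.
\]

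Finally, to get the desired cleaner form $\PhiLSH(H_i, S_i)/(k-i)$, I would use the basic counting fact that $u_i \geq k - i$ (since at most $i$ of the optimal clusters can be covered after $i$ centers have been opened). Together with the trivial bound $\PhiLSH(H_i, S_i) + \PhiLSH(U_i, S_i) \geq \PhiLSH(U_i, S_i)$, this gives
\[
\frac{\PhiLSH(U_i, S_i)}{u_i \cdot (\PhiLSH(H_i, S_i) + \PhiLSH(U_i, S_i))} \;\leq\; \frac{1}{k - i}\,,
\]
which, when multiplied through by $\PhiLSH(H_i, S_i)$, yields exactly the claimed bound.

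I do not anticipate any real obstacle here: both component lemmas are already proved, \cref{lemma:dist} gives the sampling probability in the LSH-adjusted distribution, and the rest is elementary arithmetic using $u_i \geq k - i$. The one point to double-check is that the covered-cluster bound of \cref{lemma:covered_cluster} is stated deterministically (so that conditioning on $s \in H_i$ does not introduce a subtle issue), which it is.
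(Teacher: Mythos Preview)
Your proof is correct and follows essentially the same approach as the paper: both condition on whether $s\in U_i$ or $s\in H_i$, apply the two preceding lemmas, use that the sampling probability of $H_i$ equals $\PhiLSH(H_i,S_i)/\PhiLSH(P,S_i)$, and then bound $\PhiLSH(U_i,S_i)/\PhiLSH(P,S_i)\leq 1$ and $u_i\geq k-i$. The only cosmetic difference is that you write the denominator as $\PhiLSH(H_i,S_i)+\PhiLSH(U_i,S_i)$ rather than $\PhiLSH(P,S_i)$, which is the same thing.
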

\begin{proof}
    We have that $\ee{\Psi_{i+1} - \Psi_i \mid \cF_i}$ equals the sum of 
    \begin{align*}
         \ee{\Psi_{i+1} - \Psi_i\mid \cF_i, \{s\in U_i\} }\cdot \Pr[s\in U_i \mid \cF_i] 
    \end{align*}
    and
    \begin{align*}
        \ee{\Psi_{i+1} - \Psi_i\mid \cF_i, \{s\in H_i\} }\cdot \Pr[s\in H_i \mid \cF_i]\,. 
    \end{align*}
    Now, by \cref{lemma:uncovered_cluster} and \cref{lemma:covered_cluster} together with the fact that  \rejsam samples a center in $H_i$ with probability $\frac{\PhiLSH(H_i, S_i)}{\PhiLSH(P, S_i)}$, we can upper bound this sum by
    \begin{align*}
          0 + \frac{\PhiLSH(U_i, S_i)}{u_i} \cdot \frac{\PhiLSH(H_i, S_i)}{\PhiLSH(P, S_i)} 
        \leq \frac{\PhiLSH(H_i, S_i)}{u_i}  \leq \frac{\PhiLSH(H_i, S_i)}{k-i}\,.
    \end{align*}
\end{proof}

We are now ready to bound the overall cost of \rejsam.
\begin{theorem}[Theorem 11 in Dasgupta]
    If $S_k = S$ are the centers returned by \rejsam then 
    \begin{align*}
        \ee{\Phi(P, S)} \leq 8 c^6 (\ln(k) + 2) \OPT_k(P)\,.
    \end{align*}
\end{theorem}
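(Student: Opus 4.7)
The plan is to follow Dasgupta's potential argument almost verbatim, using the three preparatory lemmas already stated. The cost of the returned solution decomposes as
\begin{align*}
    \Phi(P, S) = \Phi(H_k, S_k) + \Phi(U_k, S_k),
\end{align*}
so it suffices to bound each summand separately in expectation. The first term is immediate from \cref{cor:cover_bound}, which gives $\ee{\Phi(H_k, S_k)} \leq 8 c^4 \OPT_k(P)$.

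For the uncovered term, the key observation is that $W_k = u_k$: every one of the $k$ opened centers either covers a previously uncovered cluster (decreasing $u_i$ by one while leaving $W_i$ fixed) or is wasted (increasing $W_i$ by one while leaving $u_i$ fixed), so after $k$ opens the two counts coincide. Therefore $\Psi_k = \PhiLSH(U_k, S_k) \geq \Phi(U_k, S_k)$, and since $\Psi_0 = 0$, a telescoping sum combined with the tower property of conditional expectation and~\cref{lem:psi_bound} yields
\begin{align*}
    \ee{\Phi(U_k, S_k)} \;\leq\; \ee{\Psi_k} \;=\; \sum_{i=0}^{k-1} \ee{\Psi_{i+1} - \Psi_i} \;\leq\; \sum_{i=0}^{k-1} \frac{\ee{\PhiLSH(H_i, S_i)}}{k-i}.
\end{align*}

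To close the argument I would relate $\PhiLSH$ to $\Phi$ using the success of the LSH data structure: for every $x \in P$, $\dist(x, \text{Query}(x))$ is at most $c$ times the distance to the closest inserted point, and since the inserted points are precisely the opened centers $S_i$, this gives $\PhiLSH(H_i, S_i) \leq c^2 \cdot \Phi(H_i, S_i)$. Plugging this into the telescoping bound and applying \cref{cor:cover_bound} termwise yields
\begin{align*}
    \ee{\Psi_k} \;\leq\; 8 c^6 \OPT_k(P) \cdot \sum_{i=0}^{k-1} \frac{1}{k-i} \;\leq\; 8 c^6 (\ln k + 1)\,\OPT_k(P).
\end{align*}
Adding the $H_k$-bound from \cref{cor:cover_bound} and using $c \geq 1$ gives a final factor of $8 c^6(\ln k + 2)$, matching the claim.

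The only genuine subtlety was already absorbed into the potential: defining $\Psi_i$ with $\PhiLSH$ rather than $\Phi$ is what makes \cref{lemma:covered_cluster} go through, since the monotonicity of the LSH data structure guarantees $\PhiLSH(U_i, S_{i+1}) \leq \PhiLSH(U_i, S_i)$ even when the new center lies in a covered cluster. Without that monotone comparison, the standard $\Phi$-potential would only admit an extra $O(\log k)$ slack, as in~\cite{noisykmeanspp}. Once the potential is chosen correctly, everything else is a clean transcription of Dasgupta's bookkeeping.
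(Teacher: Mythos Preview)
Your proof is correct and follows essentially the same route as the paper's own proof: decompose into covered and uncovered parts, bound the covered part by \cref{cor:cover_bound}, bound the uncovered part via the telescoping potential $\Psi_k$ using \cref{lem:psi_bound}, convert $\PhiLSH$ to $\Phi$ via the $c^2$ factor from the successful LSH data structure, and combine. Your write-up is in fact slightly more explicit than the paper's (you spell out why $W_k=u_k$ and why $\Phi(U_k,S_k)\le\PhiLSH(U_k,S_k)$), but the argument is the same.
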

\begin{proof}
    Using $\Phi(P, S) = \Phi(H_k, S) + \Phi(U_k, S)\leq \Phi(H_k, S) + \PhiLSH(U_k, S) = \Phi(H_k, S) + \Psi_k$, we have
    \begin{align*}
        \ee{\Phi(P, S)}& \leq \ee{\Phi(H_k, S)} + \sum_{i=0}^{k-1} \ee{\Psi_{i+1} - \Psi_i} \\
        &  \leq \ee{\Phi(H, S)} + \sum_{i=0}^{k-1} \ee{\frac{\PhiLSH(H_i, S_i)}{k-i} } \\
        & \leq \ee{\Phi(H, S)} + c^2 \sum_{i=0}^{k-1} \ee{\frac{\Phi(H_i, S_i)}{k-i} } \\
        & \leq 8c \cdot \OPT_k(P) + 8c^6 \sum_{i=0}^{k-1} \frac{\OPT_k(P)}{k-i}\\
        & \leq 8c^6 (\ln(k) + 2) \OPT_k(P)\,, 
    \end{align*}
    where the second inequality is by \cref{lem:psi_bound}, the third inequality is by the assumption that the LSH data structure is successful and thus returns $c$-approximate distances, and the penultimate inequality is by \cref{cor:cover_bound}.
\end{proof}

\subsection{Main Theorem for \rejsam Algorithm}
\rejsammain*
\begin{proof}
With probability at least $1-1/n$, the LSH data structure is successful and we will show that the statements of the theorem holds  if that is the case.
We start by showing that \rejsam samples points $x$ that are at most a factor $c^2$ away from the $D^2$-distribution. From \cref{lemma:dist} we know that the probability of sampling any point $x$ is
$
\frac{\dist(x, \text{Query}(x))^2}{\sum_{y \in P} \dist(y, \text{Query}(y))^2}\,. 
$
Since (by the assumption that the data structure is successful) we have that 
$
\dist(x,S) \leq \dist(x, \text{Query}(x)) \leq c \cdot \dist(x,S),
$ so we have
\begin{align*}
\frac{\dist(x,S)^2}{c^2\sum_{y \in P} \dist(y, S)^2} \leq &\frac{\dist(x, \text{Query}(x))^2}{\sum_{y \in P} \dist(y, \text{Query}(y))^2} \\
\leq & c^2 \cdot \frac{\dist(x,S)^2}{\sum_{y \in P} \dist(y, S)^2}.
\end{align*}
The time to initialize the multi-tree embedding (\mtinit) is $O(n d \log(d\Delta))$, the time to initialize the data structure used by \mtopen and \mtsample is $O(n \log(d\Delta))$, the total running time of $\mtopen$ is $O(n \log(d\Delta) \log n)$ (\cref{lemma:mtopen-fast-runtime}) and the running time of each call to \mtsample is $O(\log n)$ (\cref{lemma:mtsample-fast-runtime}).
 Finally, by \cref{lemma:numberiterations}, the expected number of iterations of the loop in \rejsam is ${O}(c^2 d^2 k)$, and the running time of each iteration is dominated by the running time of the Insert and Query operations, which is ${O}\left(d\log(\Delta)\cdot \left(n\log(\Delta )\right)^{O(1/c^2)}\right)$ by \cref{thm:data_structure}. Hence the total running time is $O\left(n \log(d \Delta)( d+  \log n) + k c^2 d^3 \log(\Delta) \cdot\left(n\log(\Delta )\right)^{O(1/c^2)}\right)$. 
The analysis of the approximation guarantee is presented in the previous section.
\end{proof}

\section{Variance of the Experiments and Aspect Ratio} \label{app:var}
\cref{var:song_cost} and \cref{var:kddcup_cost} presents the variance of the experiments. Recall that the numbers are reported over $5$ runs.

The assumption of bounded aspect ratio allows a clean presentation of the result. The dependency can, for example, be removed (using ideas from prior works) if we have a  very rough estimate of the optimum solution (e.g., within a factor $n$ or even $n^{10}$). Indeed, in that case,  we can obtain an instance in which  each coordinate of each point is an integer in range $[1, $poly$(n)]$ by losing a factor $1+1/n$ in the approximation guarantee (see~\cite{DBLP:conf/focs/AhmadianNSW17}). This bounds $\log \Delta = O(\log (nd))$. In practice this can be achieved very efficiently. In order to bound the height of the tree, we propose the following:
\begin{itemize}
\item We first compute an estimate of optimum solution by sampling a solution of $20$ randomly chosen points from the input. Then we compute the cost of this solution by assigning each point to the closest in the solution.
\item Then we divide this value by number of point and number of coordinate and $200$. This is intuitively the error that we let each coordinate make. The factor $200$ is chosen to ensure that the total error made is within $0.5\%$ of the considered optimum value. The call this value the scaling factor.
\item Afterwards, for each dimension of each point we divide it by the scaling factor and remove the fraction. For instance if the value of a considered coordinate is $1.2345$ and scaling factor is $0.01$, the resulting value would be $123$.
\end{itemize}
\begin{table*}[t]
  \centering
  \resizebox{\columnwidth}{!}{%
  \begin{tabular}{|c|c|c|c|c|c|c|}
    \hline
    Algorithm & $k=100$ & $k=500$ & $k=1000$ & $k=2000$ & $k=3000$ & $k=5000$ \\
    \hline
    \fastkm & 75364 & 169739 & 88843 & 92564 & 24225 & 40731 \\
    \hline  
    \rejsam & 288718 & 215658 & 74654 & 68922 & 87984 & 75364\\
    \hline  
    \kmpp   & 223686 & 64796 & 26784 & 20958 & 20881 & 30295\\
    \hline  
    \kmc    & 393782 & 121318 & 82700 & 22299 & 26945 & 15460\\
    \hline 
    \random & 687634 & 294580 & 147379 & 189350 & 182828 & 132779\\
    \hline  
  \end{tabular}
  }
  \smallskip
  
  \caption{The variance of the solutions of the algorithms for 
    the Song dataset for various values of $k$. All the numbers are scaled down by a factor $10^5$.}
    \label{var:song_cost}
\end{table*}

\begin{table*}[t]
  \centering
  \resizebox{\columnwidth}{!}{%
  \begin{tabular}{|c|c|c|c|c|c|c|}
    \hline
    Algorithm & $k=100$ & $k=500$ & $k=1000$ & $k=2000$ & $k=3000$ & $k=5000$ \\
    \hline
    \fastkm& 27110 & 672 &  813 & 86 & 77 & 163 \\
    \hline  
    \rejsam& 20440 & 1631 & 799 & 290 & 227 & 86 \\
    \hline  
    \kmpp & 8294 & 996 & 269 & 205 & 42 & 24\\
    \hline  
    \kmc  & 11529 & 830 & 883 & 204 & 495 & 135\\
    \hline  
    \random & 567214 & 290954 & 24118 & 23299 & 8770 & 23243\\
    \hline  
  \end{tabular}
  }
  \smallskip
  
  \caption{The variance of the solutions of the algorithm for 
    KDD-Cup dataset for various values of $k$. All the numbers are scaled down by a facto $10^2$.}
        \label{var:kddcup_cost}
\end{table*}

\fi

\end{document}